\theoremstyle{plain}
\newtheorem{theorem}{Theorem}[section]
\newtheorem{claim}[theorem]{Claim}
\newtheorem{corollary}[theorem]{Corollary}
\theoremstyle{definition}
\newtheorem{definition}[theorem]{Definition}
\theoremstyle{remark}
\newtheorem{property}{Property}
\newcommand*{\addFileDependency}[1]{
  \typeout{(#1)}
  \@addtofilelist{#1}
  \IfFileExists{#1}{}{\typeout{No file #1.}}
}
\xpretocmd{\digraph}{\addFileDependency{#2.dot}}{}{}
\def\cD{\mathcal{D}}
\def\cC{\mathcal{C}}
\def\cP{\mathcal{P}}
\def\cO{\mathcal{O}}
\def\exp{\mathbb{E}}
\def\eqdef{\stackrel{\text{def}}{=}}
\definecolor{darkgreen}{rgb}{0.0, 0.5, 0.0}
\def\EXP32{\ifmmode \mathrm{ABoB} \else ABoB\fi}
\newcommand{\card}[1]{\lvert #1 \rvert}
\newcommand{\roey}[1]{#1}
\newcommand{\revision}[1]{#1}
\newcommand{\revisiontwo}[1]{#1}
\newcommand{\revisionKDD}[1]{{#1}}
\newcommand{\revisionCR}[1]{{#1}}
\newcommand{\para}[1]{\paragraph{\bf #1}}
\begin{document}

\title{Hierarchical Adversarial Bandits for Online Configuration Optimization}

\author{Gil Shabat}
\authornote{These authors contributed equally to this research.}
\email{gshabat@nvidia.com}
\orcid{0000-0002-7798-3194}
\affiliation{%
  \institution{NVIDIA}
  \city{Tel Aviv}
  \country{Israel}}

\author{Chen Avin}
\authornotemark[1]
\email{avin@bgu.ac.il}
\orcid{0000-0002-6647-8002}
\affiliation{%
  \institution{Ben-Gurion University of the Negev}
  \city{Beer Sheva}
  \country{Israel}}
  \affiliation{%
  \institution{NVIDIA}
  \city{Tel Aviv}
  \country{Israel}}

\author{Shie Mannor}
\email{smannor@nvidia.com}
\orcid{0000-0003-4439-7647}
\affiliation{%
  \institution{NVIDIA}
  \city{Tel Aviv}
  \country{Israel}}

\author{Hanan Shteingart}
\authornotemark[1]
\email{hshteingart@nvidia.com}
\orcid{0000-0002-7117-4337}
\affiliation{%
  \institution{NVIDIA}
  \city{Tel Aviv}
  \country{Israel}}

\author{Zvi Lotker}
\email{lotkerz@biu.ac.il}
\orcid{0000-0002-3759-5584}
\affiliation{%
  \institution{Bar-Ilan University}
  \city{Ramat Gan}
  \country{Israel}}

\author{Roey Yadgar}
\authornotemark[1]
\authornote{Work done while at NVIDIA.}
\email{roaiyadgar@gmail.com} 
\orcid{0009-0008-0177-8160}
\affiliation{
  \institution{NVIDIA}
  \city{Tel Aviv}
  \country{Israel}}

\renewcommand{\shortauthors}{Gil Shabat et al.}

\begin{abstract}
\revisionKDD{Motivated by Online Configuration Optimization in large, dynamic parameter spaces, this work studies the nonstochastic multi-armed bandit (MAB) problem in metric action spaces with oblivious Lipschitz adversaries.
We propose ABoB (Adversarial Bandit over Bandits), a hierarchical framework that decomposes the configuration space into clusters to accelerate learning and adapt to changing environments.
We evaluate ABoB using standard algorithms such as EXP3 and Tsallis-INF on a real-world production storage system, demonstrating significant performance gains of up to $50\%$ compared to state-of-the-art "flat" bandit algorithms. 
Extensive simulations further confirm that ABoB effectively exploits metric structures, achieving up to $91\%$ improvement in adversarial metric scenarios while significantly reducing computational running time. Theoretical analysis grounds this empirical success: we prove that ABoB maintains a worst-case "safety net" bound of $O(\sqrt{kT})$—matching traditional methods, where $T$ is the number of rounds and $k$ is the number of arms, while capable of accelerating learning to $O(k^{1/4}\sqrt{T})$ under favorable Lipschitz conditions. This combination of operational efficiency and theoretical soundness makes ABoB a practical solution for automated system tuning.}
%
\end{abstract}

\begin{CCSXML}
<ccs2012>
<concept>
<concept_id>10003752.10010070.10010071.10010261.10010272</concept_id>
<concept_desc>Theory of computation~Sequential decision making</concept_desc>
<concept_significance>500</concept_significance>
</concept>
<concept>
<concept_id>10003752.10003809.10010047</concept_id>
<concept_desc>Theory of computation~Online algorithms</concept_desc>
<concept_significance>500</concept_significance>
</concept>
<concept>
<concept_id>10010147.10010257.10010258.10010261.10010272</concept_id>
<concept_desc>Computing methodologies~Sequential decision making</concept_desc>
<concept_significance>500</concept_significance>
</concept>
<concept>
<concept_id>10010147.10010257.10010282.10010284</concept_id>
<concept_desc>Computing methodologies~Online learning settings</concept_desc>
<concept_significance>300</concept_significance>
</concept>
<concept>
<concept_id>10002951.10003152.10003520</concept_id>
<concept_desc>Information systems~Storage management</concept_desc>
<concept_significance>100</concept_significance>
</concept>
</ccs2012>
\end{CCSXML}

\ccsdesc[500]{Theory of computation~Sequential decision making}
\ccsdesc[500]{Theory of computation~Online algorithms}
\ccsdesc[500]{Computing methodologies~Sequential decision making}
\ccsdesc[300]{Computing methodologies~Online learning settings}
\ccsdesc[100]{Information systems~Storage management}

\keywords{Adversarial multi-armed bandit, Hierarchical bandits, Configuration optimization, 
Lipschitz condition, Online algorithms}


\maketitle

\section{Introduction}

The multi-armed bandit (MAB) problem is a fundamental concept in decision theory and machine learning. It effectively illustrates the exploration-exploitation dilemma that arises in many real-world situations, such as clinical trials, online advertising, resource allocation, and dynamic pricing \cite{villar2015multi,schwartz2017customer}. 
In its simplest form, an agent must repeatedly choose from a set of actions, known as ``arms,'' each of which provides a reward. The objective is to maximize the cumulative reward over time, or equally, minimizing the \emph{regret}. The primary challenge is finding the right balance between exploring different arms to understand their reward distributions and exploiting those arms that have previously generated the highest rewards \cite{slivkins2019introduction,bergemann2006bandit,bubeck2012regret}.
%
%
The MAB framework has made significant theoretical progress over the years, and its practical applications offer valuable insights into optimal decision-making under uncertainty \cite{djallel2019survey}. However, many real-world applications diverge from the assumptions of the classical MAB setting, especially when addressing evolving environments and complex relationships among choices \cite{auer2002nonstochastic, slivkins2008adapting}.


This paper addresses one such departure, studying a framework for the multi-armed bandit problem in a dynamic, nonstochastic (adversarial) environment. The setup is inspired by real-world systems that involve a vast number of optimization parameters, such as automated configuration tuning for computing and storage systems. Automatic configuration and tuning in various real-world settings—ranging from industrial machines to smart home appliances—have become critical challenges due to the increasing complexity and diversity of modern systems and devices.
In the context of data centers and high-performance computing (HPC), we can find examples of this, such as GPU kernel optimization using Bayesian optimization \cite{willemsen2021bayesian}, online energy optimization in GPUs \cite{xu2024online}, and hyperparameter optimization \cite{li2018hyperband}. 
\revision{The method and work we report here were specifically applied to design an automated method that optimizes a real storage system employed in a large distributed computation cluster that has a large configuration space.}

\revisiontwo{In our scenario, each system configuration (a specific setup of parameters) is considered as an arm.}
Configurations are situated in a metric space and partitioned into clusters that exhibit Lipschitz properties, which allow us to handle the extensive search space. Unlike traditional bandit settings, our focus is on a nonstochastic landscape where the reward for each arm can change adversarially over time, \revision{including the optimal arm (configuration). } This reflects, for example, the variation in performance of different system configurations under fluctuating workloads, such as the dynamic arrival and completion of job/tasks in the system. Importantly, the arms are not independent; they are grouped into clusters based on a predefined metric that represents the similarity of configurations. The arms' clustering can either be given as input or initially computed by the MAB algorithm. 

In each cluster of arms, the rewards adhere to a Lipschitz condition. This means that configurations that are ``close'' (i.e., have similar parameter settings) will yield similar performance results, even as the performance shifts with the environment dynamics. This conveys that while the \emph{optimal configuration} for a system may change with different workloads, configurations with similar parameter settings are likely to show comparable performance levels. The "traveling arms" framework requires algorithms that efficiently learn and adapt to an evolving reward landscape while taking advantage of the structural information offered by the partition and the Lipschitz characteristics of the arms. We explore the challenges and opportunities inherent in this new bandit setting, presenting algorithms and theoretical guarantees aimed at minimizing regret in this dynamic and practical environment.

\begin{figure*}[t]
    \centering
    \subfigure[System overview]{
    \label{fig:overview}
    \includegraphics[width=.65\linewidth]{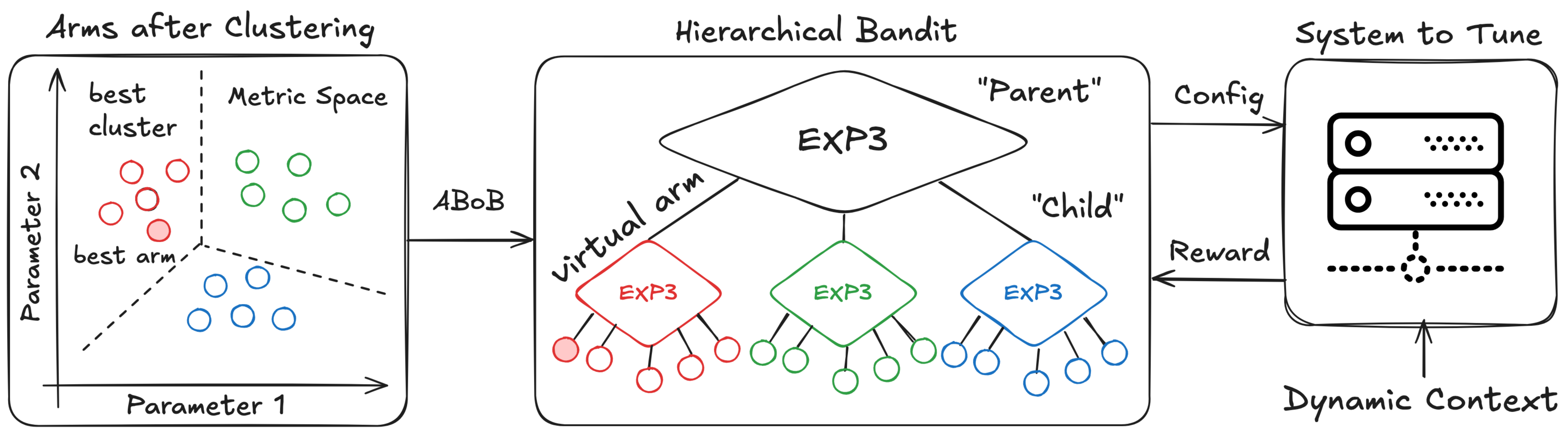}}
    \subfigure[Results' summary]{
    \label{fig:table}
    \includegraphics[width=.3\linewidth]{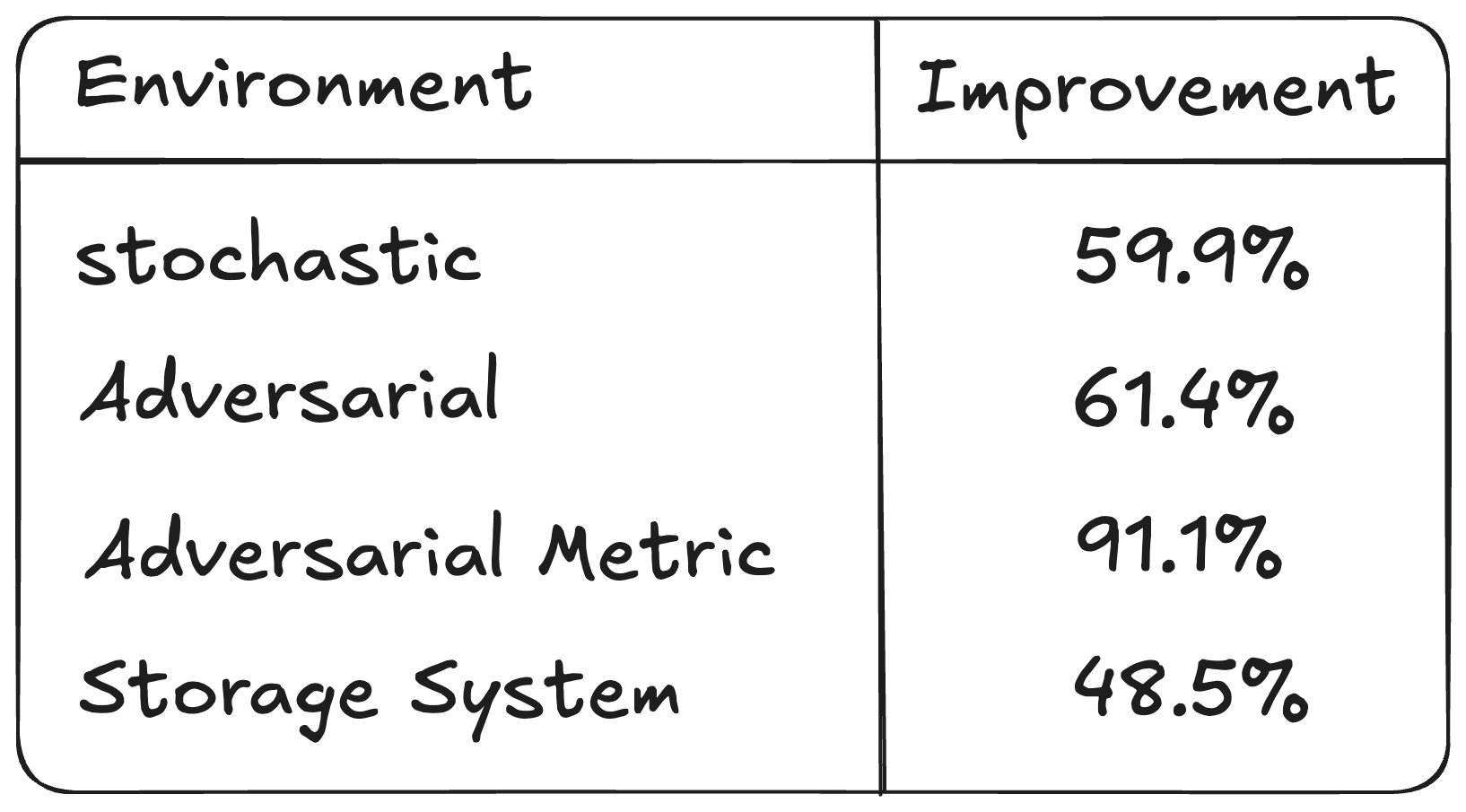}}
    \caption{
    (a) System overview of Adversarial Bandit over Bandits (\EXP32).
    (left) Partition the configurations (arms) into clusters of similar configurations.
    (middle) Hierarchical bandits over the clusters (\emph{virtual arms}).
    (right) Optimize a system's performance under a dynamic context.
    (b) A summary of \EXP32's results.
    }
    \Description{System-overview diagram and a small summary table illustrating ABoB.}
    \label{fig:system}
\end{figure*}

To leverage the structural information inherent in the problem setup, we propose a hierarchical MAB algorithm called \revision{\emph{Adversarial Bandit over Bandits} (\EXP32). } Refer to Figure \ref{fig:system} for an illustration.
Our algorithm utilizes multiple instances of other well-known adversarial bandit algorithms \revision{like the classical EXP3 \cite{auer2002nonstochastic} or Tsallis-INF \cite{zimmert2021tsallis}}, known to have the ``best of two worlds" property. 
Given the arms' partition, at the \revision{first level (tier) of our hierarchy, a \emph{parent} bandits algorithm (e.g., EXP3 or Tsallis-INF),  looking at each cluster as a \emph{virtual arm}, } determines which cluster to engage in the current time step. Subsequently, a second-level \revision{algorithm, denoted as a \emph{child} bandit algorithm (e.g., EXP3 or Tsallis-INF)}, specific to each cluster, is activated to select the next arm within the chosen cluster. 
By design, employing known adversarial algorithms allows us to benefit from their known strengths in adversarial settings. At the same time, the clustering approach enables us to exploit the underlying metric, as similar configurations tend to yield similar outcomes (i.e., they satisfy the Lipschitz condition). 
\revisionKDD{While hierarchical 'bandit-over-bandit' structures have been explored for model selection \cite{agarwal2017corralling}, our work uniquely adapts this architecture for spatial decomposition in metric spaces. This allows us to exploit local Lipschitz properties in configuration tuning without the complexity of meta-learning regularizers.}

\para{Paper Contributions.} 
This paper presents several key contributions:
(i) \emph{A Lipschitz Clustering-based MAB Variant}: We introduce a new variant of the Multi-Armed Bandit (MAB) problem that addresses the challenges of dynamic environments with large, structured action spaces, particularly in the context of system configuration optimization.
(ii) \emph{Hierarchical Clustering and the \EXP32 Algorithm}:
\revisionKDD{We propose Adversarial Bandit over Bandits (\EXP32), a hierarchical algorithm designed for large configuration spaces. \EXP32 clusters configurations into "virtual arms" to exploit local structures, using standard MAB algorithms (like EXP3 or Tsallis-INF) at both the cluster and arm levels to adapt to changing environments.}
(iii) \emph{Real-World Impact \& Efficiency}: \revisionKDD{We validate \EXP32 on a real production storage system, where it achieved approximately $50\%$ lower regret compared to state-of-the-art flat methods. Extensive simulations further demonstrate improvements of up to $91\%$ in metric spaces and confirm that \EXP32 significantly reduces computational running time compared to flat baselines.} See Figure \ref{fig:table} for a summary of the results.
(iv) \emph{Theoretical Safety Net}: \revisionKDD{We provide a rigorous analysis proving that \EXP32 is safe to deploy: in the worst case, its regret bound is $O(\sqrt{kT})$, matching the standard "flat" approach, where $T$ represents the number of iterations and $k$ denotes the number of arms \revision{that is finite but assumed to be large}. This performance matches the bound of the "flat"  method (e.g., of Tsallis-INF \cite{zimmert2021tsallis}). However, under favorable Lipschitz conditions, we prove the bound improves to $O(k^{1/4}\sqrt{T})$. This ensures that using \EXP32 offers significant potential gains without risking worse performance than traditional methods.}
Due to space constraints, additional experiments and details are present in the technical appendix.


\section{Related Work}
The multi-armed bandit (MAB) problem has been extensively studied under various assumptions, leading to a rich landscape of algorithms and theoretical results \cite{slivkins2019introduction}. Early work focused primarily on the stochastic setting, where the finite set of arm rewards are IID random variables drawn from fixed, unknown distributions. Algorithms like Upper Confidence Bound (UCB) \cite{ucb2002auer} and Successive Elimination \cite{even2002pac} provide strong performance guarantees in this scenario by balancing exploration and exploitation based on estimated reward distributions' means and confidence intervals. 
The canonical worst-case scenario of ``needle in a haystack'' scenario leads to a regret of $\Tilde{\cO}(\sqrt{kT})$ (ignoring polylogarithmic factors) for these algorithms.

Another line of work deals with continuous action spaces, still under stochastic settings. The most straightforward approach is performing uniform discretization on the arms, under assumptions of metric spaces and smoothness assumptions, 
and then applying a known multi-armed bandit approach (like UCB). More advanced techniques include approaches like the \emph{zooming algorithm} \cite{kleinberg2008multi,kleinberg2019bandits}, 
which takes a more adaptive and dynamic approach. It starts with a coarse view of the action space and progressively refines its focus on promising regions. 
It maintains a confidence interval for each arm (or region), and based on these intervals, it either ``zooms in'' on a region by dividing it into subregions or eliminates it if it's deemed suboptimal. 
A more challenging research direction considers the finite 
nonstochastic setting with a finite number of arms, where adversarial opponents control reward assignments. For this setting to be feasible, the usual assumption is of an oblivious adversary and a \emph{regret} where the algorithm is compared with the best arm in hindsight \revisionKDD{, that is, the best fixed action which would have resulted in the highest cumulative reward}. For this case, algorithms like EXP3 \cite{auer2002nonstochastic} provide robust regret guarantees against any sequence of rewards, surprisingly matching the worst-case bound of the stochastic case.  

More recently, and perhaps the most related approach to the algorithm presented in this paper, the combination of continuous action spaces and adversarial rewards has been explored in the adversarial zooming setting, presenting unique challenges addressed by algorithms that dynamically adapt their exploration strategy based on the observed structure of the reward landscape \cite{podimata2021adaptive}, yielding a regret bound that depends on a new quantity, $z$, called \emph{adversarial zooming dimension} and is given by
$\mathbb{E}[R(T)] \le \Tilde{\cO}( T^{\frac{z+1}{z+2}}).$
For \emph{finite} number of arms, we have $z=0$, 
and the paper provides a worst-case regret-bound 
of $\cO(\sqrt{kT \log^5 T})$. This regret bound is similar to that of the nonstochastic multi-armed bandit, i.e. $\Tilde{\cO}(\sqrt{kT})$.
\revisiontwo{For the finite case, the algorithm, therefore, does not offer improved bounds. In contrast, our approach provides conditions for improved performance and demonstrates practical benefits.}
\revisionKDD{Our approach leverages a hierarchical structure that shares similarities with Bandit Model Selection frameworks like CORRAL \cite{agarwal2017corralling,cheung2022hedging}. However, these methods employ a master algorithm to compete against a set of heterogeneous base algorithms (e.g., to select the best learning rate), often requiring complex stability corrections (e.g., log-barrier regularizers) to handle the high variance of importance-weighted feedback. In contrast, ABOB employs hierarchy for spatial decomposition: our "child" bandits solve disjoint sub-problems (clusters) to exploit local geometry rather than competing to solve the global problem.}

\revisionKDD{Similarly, our work relates to Algorithmic Chaining \cite{cesa2017algorithmic}, which uses a deep, multi-scale tree of experts for theoretical minimax optimality in non-parametric learning. While chaining relies on intricate multi-level coverings, ABOB focuses on a practical two-level regime suitable for system configuration. We propose a "flat" partitioning approach that is simpler to implement and allows the direct reuse of off-the-shelf adversarial algorithms (e.g., Tsallis-INF) at both the cluster and arm levels, ensuring ease of deployment in real-world storage systems.}

Other papers studied various settings of correlated or dynamic arms, for example, $\mathcal{X}$–Armed Bandits \cite{bubeck2011x},  Contextual Bandits \cite{slivkins_contextual_2011}, Correlated arms \cite{gupta_multi-armed_2021}
Eluder Dimension \cite{russo2013eluder} and Dependent Arms \cite{pandey2007multi}, to name a few, but none 
\revisionKDD{utilize the parent-child hierarchy to perform dynamic partitioning of the configuration space. Unlike prior meta-learning approaches that compete heterogeneous algorithms against one another, ABOB uses the hierarchy to zoom into promising regions of the metric space using homogeneous base learners.}





\section{Problem Formulation and \EXP32 Alg.}
We study the Adversarial Lipschitz MAB (AL-MAB) Problem \cite{podimata2021adaptive}.
In particular, we consider the finite case where the set of arms forms a metric space, and the adversary is oblivious to the algorithm's random choices.
The problem instance is a triple $(K , \cD, \cC)$, where $K$ is the set of $k$ arms $\{1, 2, \dots, k\}$,  $(K , \cD)$ is a metric space, and $\cC$ is an \revision{expected } rewards assignment, i.e., an infinite sequence $\mathbf{c}_1, \mathbf{c}_2,...$ of vectors $\mathbf{c}_t=(c_t(1),...,c_t(k))$, where \revision{$c_t(a)$ is the expected reward of arm $a$ at time $t$, and }  $\mathbf{c}_t  : K \rightarrow [0, 1]$ is 
a Lipschitz function on $(K , \cD)$ \revisionKDD{at every time step $t$. For simplicity, we assume the Lipschitz constant to be 1 (Otherwise the metric $\cD$ can be rescaled)}. Formally,
\begin{equation}\label{eq:Lipschitz}
    \card{c_t(a) - c_t(a')} \le \cD(a, a') 
\end{equation}
for all arms $a, a' \in K$ and all rounds $t$,
where $\cD(a, a')$ is the distance between $a$ and $a'$ in $\cD$.

For our analytical results, we consider a special case where we partition the set of arms into a set of clusters. We let $\cP$ be a partition\footnote{We use partition and clustering interchangeably.} of $K$ where $p$ is the number of clusters and $P^1, \dots, P^{p}$ are the mutually exclusive and collectively exhaustive clusters where for each $1 \le i \le p $, $P^i \subseteq K$ and $\card{P^i} > 0$.
We will show results for the case of arbitrary clustering, as well as the case that the clustering forms a metrics space.   

For such an adversarial setting, the standard metric of interest is the \emph{regret} (aka weak regret in the original work of \cite{auer2002nonstochastic}), $R(T)$ defined for a time horizon $T$. To define it, we first need to determine the \emph{reward} of an algorithm $A$. 
For an algorithm $A$ that selects arm $a_t$ at time $t$ we consider its reward as:
    $G_{A}(T) \eqdef \sum_{t=1}^{T} c_t(a_t)$. 
%
The best arm, in hindsight, is defined as 
    $G_{\max}(T) \eqdef \max_{a \in K}\sum_{t=1}^{T} c_t(a)$,
and, in turn, the \emph{regret} of algorithm $A$ is defined as 
\begin{align}
\label{eq:Gmax}
    R(T) \eqdef G_{\max}(T) - G_{A}(T).
\end{align}
We will be interested in the expected regret $G_{\max}(T) - \exp[G_{A}(T)]$ \cite{auer2002nonstochastic}.

\begin{algorithm}[t]
\caption{\EXP32: Hierarchical Adversarial MAB Algorithm} 
\label{alg:exp32}
\begin{algorithmic}[1]
\revision{
\REQUIRE Parent A-MAB and Child A-MAB algorithms: Adversarial multi-armed bandits algorithms  (e.g., EXP3, Tsallis-INF), 
$k$ arms and their partition, $\cP$, into clusters 
\STATE \textbf{Initialize}: Init the parent A-MAB algorithm (weights, etc.). For each cluster, initialize its own child A-MAB algorithm (weights, etc.)
\FOR{$t=1$ to $T$}
    \STATE (a) Cluster selection using the parent A-MAB algorithm on virtual arms (clusters). Let $P_t \in \cP$ be the selected cluster.
    \STATE (b) Arm selection using the child A-MAB algorithm on physical arms in $P_t$. Let $a_t \in P_t$ be the selected arm.
    \STATE (c) Pull $a_t$, observe reward $c_t$.
    \STATE (d) Update the arms-level parameters (e.g., weights) for physical arms in $P_t$ using $c_t$.
    \STATE (e) Update cluster-level parameters (e.g., weights) for virtual arms (clusters) using $c_t$.
    (e.g., For EXP3 using an importance-weighted estimator, $\hat{c}_t(P_t) = c_t(a_t) / \pi_t(P_t)$, where $\pi_t(P_t)$ is the probability of selecting cluster $P_t$).
\ENDFOR
}
\end{algorithmic}
\end{algorithm}

Next, we present our novel algorithm, \EXP32, and formally study its performance. 
The basic concept of the \EXP32 algorithm is based on a simple idea of \emph{divide-and-conquer}. Alongside the set of arms, we receive or create a partition (e.g., by using your favorite clustering algorithm) of the arms into clusters. This partition creates, in fact, a hierarchy. The first level is the ``clusters" level, where we can see each cluster as a virtual (adversarial)
single arm based on the collective of arms in it. The second level is the level of physical arms within each cluster. See Algorithm \ref{alg:exp32} and Figure \ref{fig:system}.

\begin{figure*}[t!]
    \centering
    \includegraphics[width=\linewidth]{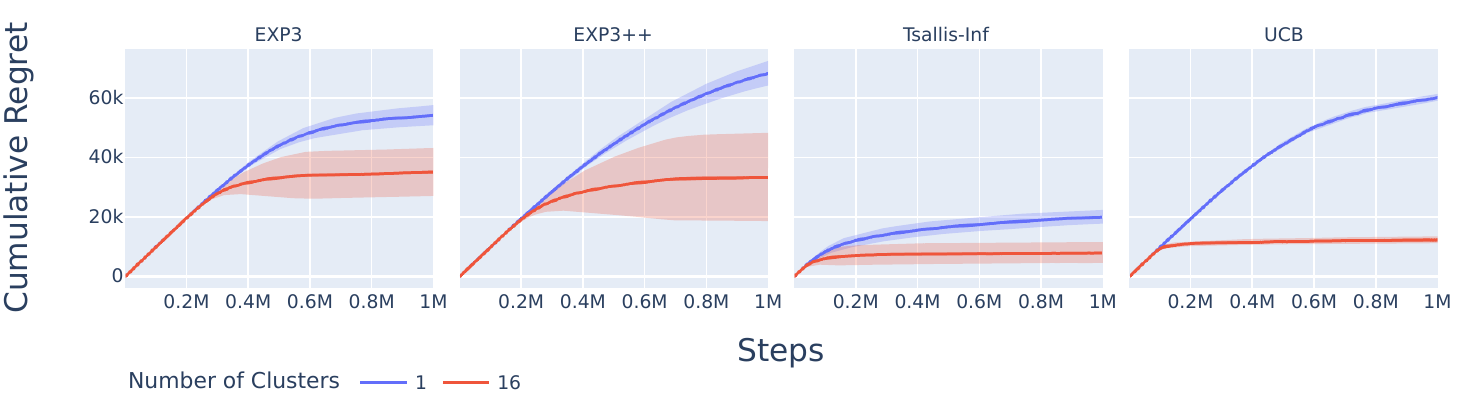}
    \caption{Comparing \EXP32 (with $\sqrt{k} = 16$ clusters) with ``flat'' algorithms (single cluster) for different well-known MAB algorithms. Stochastic Scenario from \cite{zimmert2021tsallis}, $k=256$.}
    \label{fig:4stochastic}
    \Description{Four side-by-side cumulative-regret curves vs.\ steps for ABoB versus flat baselines on a stochastic benchmark.}
\end{figure*}

\EXP32 uses this hierarchy to run flat Adversarial MAB (A-MAB) algorithms for each level, \revision{denoted as \emph{parent A-MAB} and \emph{child A-MAB}, for the first and second level, respectively. } The particular A-MAB algorithms used in \EXP32 can differ between levels, \revision{and the idea is to consider well-known A-MAB algorithms like EXP3 \cite{auer2002nonstochastic} and its variations \cite{seldin2017improved} or  Tsallis-INF \cite{zimmert2021tsallis} for example. } For the simplicity of presentation and as a concrete example, we use 
the classical EXP3 Algorithm \cite{auer2002nonstochastic} for both levels, unless otherwise stated.
For the first level, there is a single EXP3, \revision{the parent A-MAB algorithm}, that on each time step, selects between the virtual arms \revision{generated } by the clusters and decides from which cluster it will sample the next arm. In turn, for each cluster, there is a second-level EXP3 algorithm that selects the next arm within the cluster, but \revision{it is activated only } when the first level selects that cluster.
Upon selecting an arm, we update the arm's reward and then the \revision{child algorithm parameters, like arms weights in the EXP3 and Tsallis-INF algorithms } (relative to the arms in the current cluster).  Next, we update the reward of the virtual arm of the cluster  \revision{and the parent algorithm's parameters } 
(relative to other virtual arms) \revision{and move to the next time step.
}

\para{Correctness and Time Complexity of \EXP32.}
\revision{The correctness of the algorithm follows from the observation that each cluster can be viewed as an adversarial virtual arm, that generates a reward when activated at time $t$. Therefore, since the parent A-MAB algorithm is adversarial, the algorithm setup is valid. We note that regarding the child A-MAB algorithm, we have more flexibility, and if we know, for example, that each cluster is stochastic, we can use algorithms that may better fit this environment, like UCB \cite{ucb2002auer}}. 
\revision{An important feature of \EXP32 is its running time. In the adversarial setting the running time of many algorithms is $\cO(kT)$, where $T$ updates, each in the order of $\cO(k)$ are required (e.g., in EXP3). In \EXP32, however, we still need $T$ updates, but each is of the order of $\cO(\textrm{\# of clusters + \# of arms in the selected cluster})$, this for example can be significantly lower then $\cO(k)$, e.g., when we use $\sqrt{k}$ clusters each of size $\sqrt{k}$ (up to rounding). }

The main question we address next is: In terms of the regret, how much do we lose or gain by using \EXP32 and other algorithms, and the hierarchical approach? 
We start with an empirical study and experimental results on a real system.
Our results clearly demonstrate the value of the approach compare to state-of-the-art algorithms.  In turn, in Section \ref{sec:theory} we provide \revisionKDD{an} analysis of the approach, proving 
that in the worst-case we can't lose too much, but under good partitions with certain Lipschitz condition, we can prove the benefit of our algorithm.

\section{Empirical Study}\label{sec:empirical}
\revision{In this section (and the appendix), we report on an extensive empirical study on the performance of \EXP32 compared to flat algorithms. We consider syntactic scenarios in increasing complexity, and in Section \ref{sec:experimental_results}, we present results based on real system measurements.
We report results on three adversarial algorithms Tsallis-INF \cite{zimmert2021tsallis}, EXP3 \cite{auer2002nonstochastic}, and EXP3++ \cite{seldin2017improved}, but also on the, well-known, UCB \cite{ucb2002auer} algorithm for the stochastic case. In turn, we consider different \EXP32 algorithms, including the above MAB algorithms. We mostly concentrate on the case where both the parent and child algorithms are the same, and in particular on the Tsallis-INF algorithm, which is the state-of-the-art algorithm enjoying the best-of-both-worlds regret bounds (i.e., it is optimal both for stochastic and adversarial settings). 
Unless otherwise stated, we consider Bernoulli \revisionKDD{r.v.} arms with different settings on their mean values. The default number of arms is $k=256$, and the default number of steps is $T=10^6$. We repeated each experiment 10 times and report the average and standard deviation. Additional and more extensive figures, including non-identical parent-child algorithms, are given in the Appendix.}

\para{Stochastic Scenario.}
\revision{The first experiment, shown in Figures \ref{fig:4stochastic} 
is a standard stochastic MAB setting \cite{zimmert2021tsallis}, where the mean rewards are $\frac{1 + \Delta}{2}$ for the single optimal arm and $\frac{1 - \Delta}{2}$ for all the suboptimal arms, where $\Delta=0.1$. Other values for $\Delta$ and $k$ are reported in the appendix.
We can clearly observe the performance improvement of \EXP32 in all Algorithms for
$p=\sqrt{k}=16$ clusters each of size $\sqrt{k}=16$. As was reported, for a single cluster, Tsallis-INF produces the best results as in \cite{zimmert2021tsallis}, but \EXP32 using Tsallis-INF with $\sqrt{k}=16$ further improves the regret. 
}

\para{Fixed Optimal Arm and Nonstochastic (adversarial) Scenario.}
\revision{The second experiment, also taken from \cite{zimmert2021tsallis}, considers a non-stochastic (adversarial) environment with a single fixed optimal arm but changing mean values. The mean reward of (optimal arm, all sub-optimal arms) switches between $(\Delta, 0)$ and $(1, 1-\Delta)$, while staying unchanged for phases that are increasing exponentially in length. We set $\Delta=0.1$ and other values for $\Delta$ and $k$ are reported in the appendix. Figure \ref{fig:4staticArmNonstochastic} 
presents the results for several MAB algorithms, where Tsallis-INF again achieves the best results as in \cite{zimmert2021tsallis}, and \EXP32 improves it.}

\begin{figure*}
    \centering
    \includegraphics[width=\linewidth]{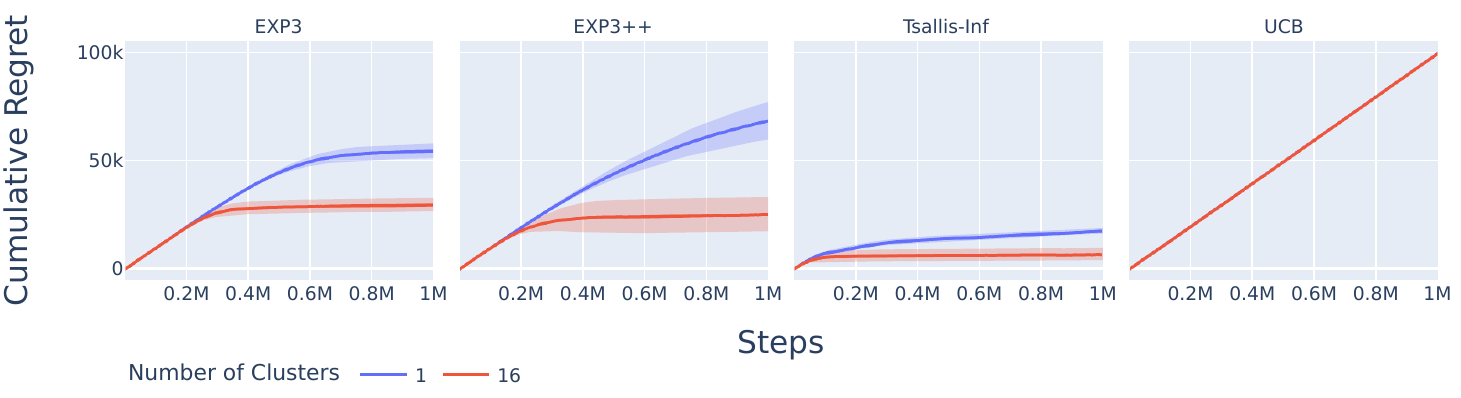}
    \caption{Fixed Optimal Arm, and Nonstochastic Scenario from \cite{zimmert2021tsallis}, $k=256$. Comparing \EXP32 (with $\sqrt{k} = 16$ clusters) with ``flat'' algorithms (single cluster) for different well-known MAB algorithms.}
    \label{fig:4staticArmNonstochastic}
    \Description{Four side-by-side cumulative-regret curves vs.\ steps for ABoB versus flat baselines on a nonstochastic benchmark.}
\end{figure*}


\para{``Traveling'' Optimal Arm, Nonstochastic (adversarial) and Metric Spaces.}
\revision{In the third experiment, we consider a scenario motivated by our application for configuration tuning. We consider a Nonstochastic (adversarial) reward with changing optimal arms (i.e., ``traveling arms''), but all rewards are in a metric space and, in particular, follow a Lipschitz condition. }
%
%
%
More formally, we have used the following setup for the environment:
(i) Metric Space: The metric space is defined as a hypercube of dimension $d$ $\mathcal{Q}=[0,\frac{1}{\sqrt{d}}]^d$. We use a setup such that $\sqrt[d]{k}$ is an integer.
(ii) Arms' location: The $k$ arms are placed over an equally spaced grid such that the distance between two closest points in each dimension is \revisionKDD{$w=\frac{1}{\sqrt{d} \sqrt[d]k}$}. For arm $i$, we denote its location on the grid as $x_i$ and recall that its \revision{mean } reward at time $t$ is denoted as $c_t(x_i)$ ~(or $c_t(i)$).
(iii) Arm's reward distribution: We define the mean reward to be $c_t(x_i) = 1 - \| a^*(t) - x_i \|$, where $a^*(t)$ is a point in the (continuous) cube that represents the best $d$-dimensional configuration settings at time $t$. Note that $a^*(t)$ is a parameter of the optimization problem and could be \emph{dynamic} over time.
(iv) Non-Stochastic: to simulate a dynamic system, we have changed $a^*(t)$ over time using a normal random walk, such that $a^*(t+1)=a^*(t)+n(t)$ and $n(t)$ is a $d$-dimensional normally distributed random variable with zero mean and equal diagonal covariance $\sigma^2$ (keeping $a^*(t)\in\mathcal{Q}$ by clipping).

Unless otherwise stated, we clustered the $d$-dimensional cube into equal volume sub-spaces.
For \EXP32 we evaluated an increasing number of clusters $p$, from $p=1$ to $p=k$, note that for these extreme cases it coincides with the flat A-MAB algorithm.
%
%
%
The default setting we used was \(k=256\) arms, and the clusters varied as \(p=2^j\), where \(j \in \{0, 1, \dots, \log_2{k}\}\). 
%

In order to further validate the theoretical results on a nonstochastic, adversarial setup, we have changed the system parameter $a^*(t)$ over time such that the different arms' mean reward changes over time, and thus also the best arm. In the appendix, Figures \ref{fig:random_walk_heatmap} and \ref{fig:random_walk_best} depicts this by showing the arms' mean reward over time as well as the optimal arm index that changes over time and the location of the best arm in the parameter space and how often an arm was the best.
Figure \roey{\ref{fig:2D_non_time}} presents the results of the above setup comparing the regret of \EXP32 with Tsallis-INF \roey{as a function of number of steps. Additionally, Figure \ref{fig:2D_non_clusters} shows the regret as a function of the number of clusters, which shows that using 16 clusters, \EXP32 achieves a regret of }$\roey{435\pm69}$ compared to the flat Tsallis-INF baseline \roey{which achieves a regret of }$\roey{4879\pm259}$, again showing the potential gain (t-test p-value $\roey{8.2\times 10^{-14}}$), of about \roey{91}\%.
\revision{Notice that the regret can be negative since we compare the results of Tsallis-INF to the best \emph{fixed} arm while the optimal arm is traveling. Since Tsallis-INF can ``travel" as well, its result can be better than the fixed optimal.}

\para{Running times} 
\revisionKDD{Figure \ref{fig:run-times} validates our assertion about the running time of \EXP32 vs. a "flat" algorithms. When evaluated on Tsallis-INF, the best fit for the running time of \EXP32 was $41.6+0.08\sqrt{k}$, with number of cluster $\sqrt{k}$, while the best fit for "flat" algorithm running time is $20.9+0.034k$, as we conjectured. Both fits are statistically significant, $p$-values $< 10^{-8}$.}


\begin{figure}
    \centering
    \includegraphics[width=.9\linewidth]{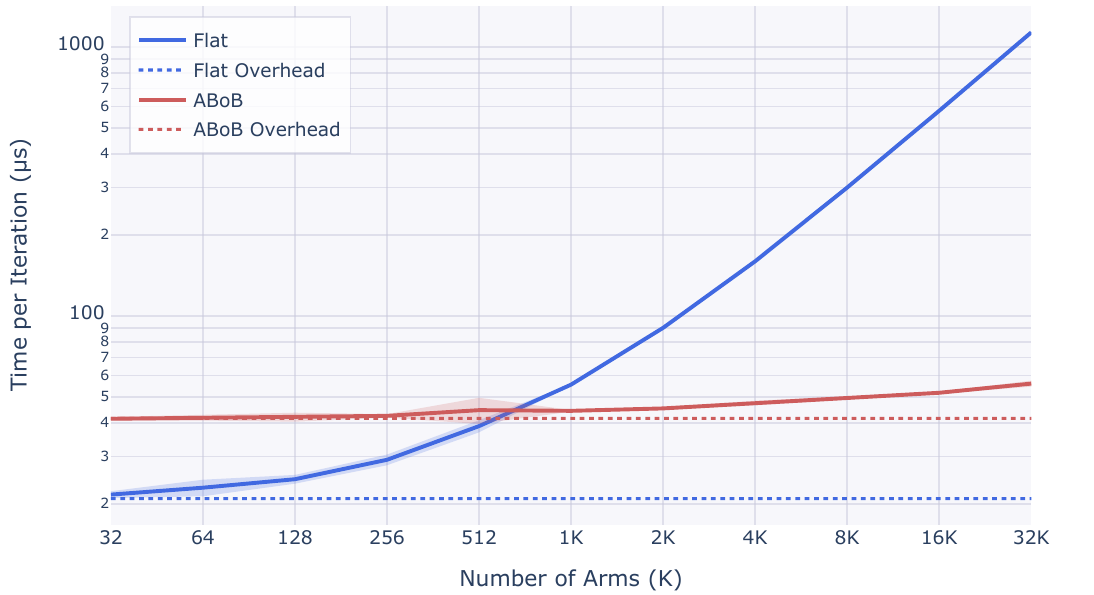}
    \caption{The running times of \EXP32 compare to a "flat" Tsallis-INF for increasing numbers of arms $k$.}
    \label{fig:run-times}
    \Description{Line plot of running time versus number of arms for ABoB and flat Tsallis-INF.}
    \vspace{-.5cm}
\end{figure}

\begin{figure*} 
  \centering
    \centering
    \subfigure[Regret vs. Steps]
    {\label{fig:2D_non_time}
    \includegraphics[width=.24\linewidth]{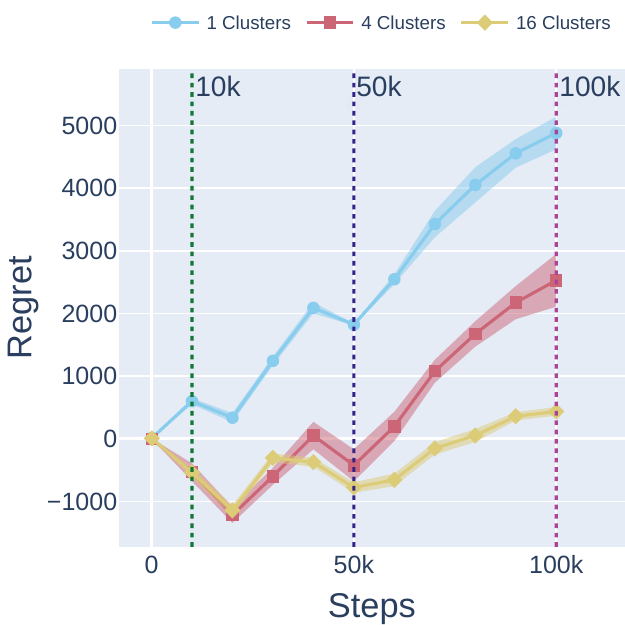}}
    \subfigure[Regret vs. Clusters]
    {\label{fig:2D_non_clusters}
    \includegraphics[width=.24\linewidth]{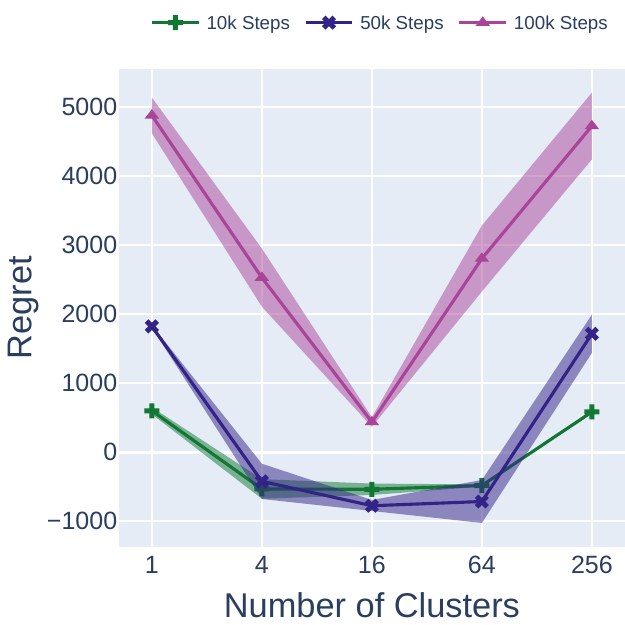}}
    \subfigure[``Traveling'' Arm]{
    \label{fig:random_walk_best}
    \includegraphics[width=.26\linewidth]{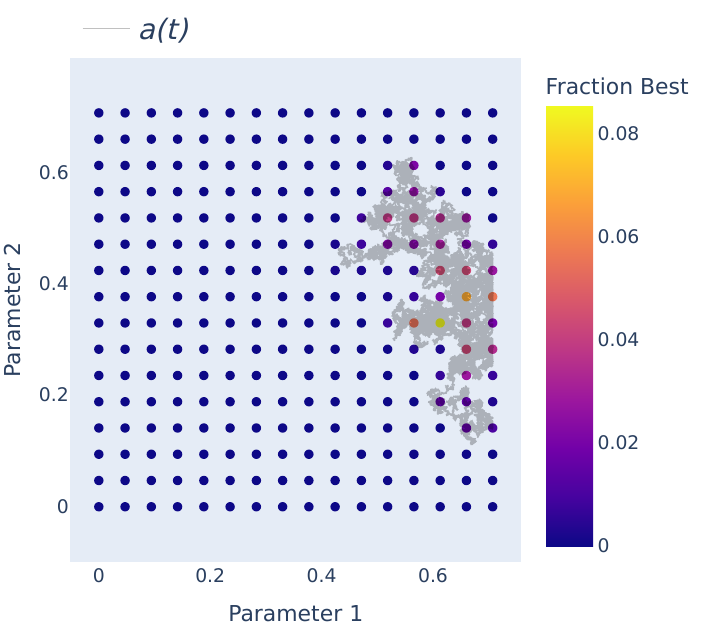}}
    \subfigure[Over time]{
    \label{fig:random_walk_heatmap}
    \includegraphics[width=.22\linewidth]{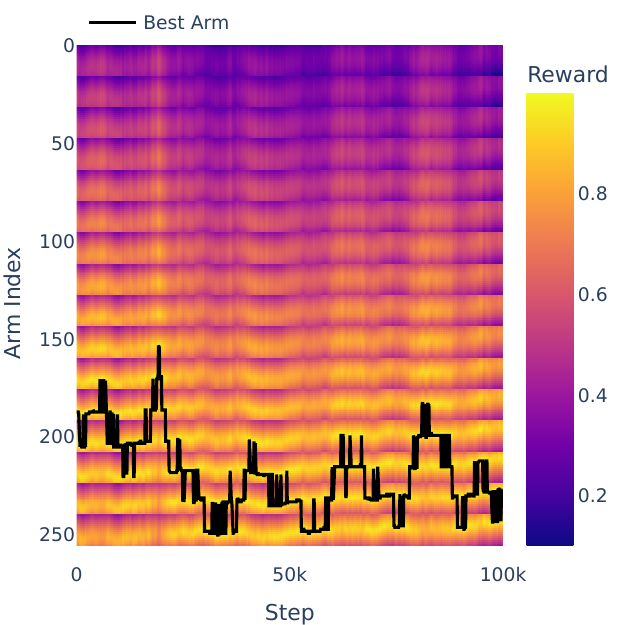}}   
    \caption{\revision{The ``Traveling'' Optimal Arm, Nonstochastic and Metric Space Scenario. (a) The cumulative regret for ABoB as a function of steps. (b) The cumulative regret for ABoB as a function of the number of clusters.
    (c) The optimal arm's travel in the configuration space and optimality frequency. (d) The arms’ mean reward over time and the optimal arm index.}
    }
    \label{fig:simNonstResults}
    \Description{Four-panel plot: regret vs.\ steps, regret vs.\ clusters, optimal-arm trajectory in the configuration space, and a heatmap of arm rewards over time.}
    \vspace{-.2cm}
\end{figure*}

\section{Experimental Results: A Real System}
\label{sec:experimental_results}
We compare a flat and \EXP32 algorithms using Tsallis-INF on a real storage system with the goal of maximizing the performance of the system. The system has a large configuration space of several parameters, some continuous and some discrete (integer value), where each performance evaluation takes a few seconds. Furthermore, the system \revisionKDD{state undergoes dynamically changing workloads (e.g., number of active jobs, as new jobs come and go), which affect the observed reward. However, for every state, the Lipschitz condition still holds (Eq. \eqref{eq:Lipschitz}) and a minor change of a parameter results in a minor shift in the outcome.}
Here we report results on optimizing $2$ parameters while keeping the others fixed.
\revisionCR{While our theoretical results are dimension-independent, focusing on two parameters here demonstrates that the hierarchical framework is viable in a noisy, constrained, real environment, allows clear 2-D visualizations of the clustering and decision boundaries (Figure~\ref{fig:snap_reward}), and keeps the arm count compatible with the seconds-per-evaluation cost of the real system, while still capturing a frequent practical scenario (tuning two parameters with others fixed). Scalability with the number of arms is a challenge for both flat and hierarchical approaches; our runtime analysis (Figure~\ref{fig:run-times}) indicates that \EXP32 scales more favorably than the flat baseline.}

Each run consists of \roey{$100,000$} iterations with 256 arms generated uniformly at random from the system's configuration search space. Clustering was implemented using K-means over the normalized arms parameters. The system cycles through six distinct workloads, switching approximately every 10,000 iterations.


\para{Rewards in a Real System.}
To validate that the real system follows the Lipschitz condition, we estimate the Lipschitz's constant $\ell$ in the real arm reward mean and compare it to a shuffled reward distribution. To estimate $\ell$, we iterated over the arms, and for each arm $x_i$ over 
its $n$ nearest neighbors, $x_j \in \mathcal{N}_n(i)$. For each arm, we then computed the mean ratio between the rewards and the metric: 
$\ell_i = \frac{1}{n}{\sum_{j\in \mathcal{N}_n(i)}}(|r_j-r_i|)/(|x_j-x_i|)$.
Figure \ref{fig:lipshitz} shows the distribution of $\ell_i$ for both the real rewards shown in Figure \ref{fig:snap_reward} and a random permutation of the rewards. The figure validates that the Lipschitz assumptions hold for the real system as the distribution of $\ell_i$ is highly concentrated near zero. 



\para{Flat vs. \EXP32 (Tsallis-INF): Real system data.}
Using the setup described above (for $T=\roey{100}$K and the number of arms is $k=256$), we run the algorithms on the \emph{real system}. \roey{The star-shaped points in Figure \ref{fig:2D_replay_clusters} shows that Tsallis-INF obtained a regret of 7819, while \EXP32, using 16 clusters, provided a regret of 4025. This is an improvement of about 49\%}. The regret was computed by comparing the algorithms' rewards to the best empirical arm in hindsight, where the rewards of each arm are interpolated between measured samples.
To validate the approach further, we replayed the reward sequence that was recorded from the real system (again using interpolation to fill rewards in all time steps). Figures \roey{\ref{fig:2D_replay_time}} and \ref{fig:2D_replay_clusters} shows how the \EXP32 $(\roey{5543\pm21})$ again is dominant over the flat Tsallis-INF baseline ($\roey{7584\pm79}$) (t-test p-value $\roey{1.8\times 1.3^{-15}}$), about \roey{27}\% improvement. 

\section{Analytical Bounds for {\EXP32} Alg.}
\label{sec:theory}


\revision{In this section we study \EXP32 analytically. }  
\revision{To continue, we need additional notations. }
Recall that $T$ is the time horizon, $\cP$ is a partition \revision{of the set of arms $K$, $k$ is the number of arms, } and $p$ is the number of clusters. Let $T^i$ be the ordered set of times by which the algorithm visits cluster $i$, so
$\sum \card{T^i} = T$. We denote by $t(i,j) \in T^i$, the time in which the algorithm visited cluster $i$ for the $j$th time.
We then additionally define $a^* \in P^*$ as the best arm in hindsight, i.e., the arm that maximizes Eq. \eqref{eq:Gmax}, and $P^*$ as the cluster that contains $a^*$.\footnote{If $a^*$ is not unique, we consider $P^*$ to be the largest cluster that contains an $a^*$.} 
Let $k^* = \card{P^*}$ denote the size of the cluster that holds $a^*$.
Let $G_{\mathrm{maxP}}(T)$ be the \revision {expected}~ reward of the best cluster in hindsight:
\begin{align*}
    G_{\mathrm{maxP}}(T) &\eqdef \max_{i \in K} \sum_{t=1}^{T} c_t(P^i) = 
    \sum_{t=1}^{T} c_t(P^{+}),
\end{align*}
and $P^{+}$ be the best cluster in hindsight, the cluster (virtual arm) that should have been played constantly by the first-level, \revision{parent A-MAB algorithm (e.g., EXP3), } assuming it is played internally by its own second-level, \revision{child A-MAB algorithm (e.g. EXP3).}

Lastly, for each cluster $P^i$, we define its best \revision{expected reward~}  in hindsight $G_{\mathrm{maxP}^i}$; what was its \revision{expected reward } if we were playing its best arm in hindsight \revision{(at the times we visited it)}, formally,
\begin{align*}
    G_{\mathrm{maxP}^i}(T^i) &\eqdef \max_{a \in P^i} \sum_{j=1}^{\card{T^i}} c_{t(i,j)}(a).
\end{align*}


In a hierarchical multi-armed bandit \revision{setup of the \EXP32 algorithm}, the total regret can be expressed as the sum of two components:
(i) Regret for choosing a cluster other than the best one: This is the regret incurred by the first-level bandit in not selecting the best cluster, defined as the cluster containing the best arm in hindsight.
(ii) Regret within the cluster: This is the regret incurred by the second-level bandit in selecting an arm within the chosen cluster relative to the best arm in that cluster.
\revision{ Next, we use this observation to bound the regret of \EXP32 in different scenarios.}

\subsection{Arbitrary Clustering: Not Much to Lose}\label{sec:notmuch}

In this subsection, we study the regret where the partition (or clustering) is done arbitrarily, i.e., without assuming any metric between the arms. For EXP3, the analysis shows (Theorem \ref{thm:clusters}) that even in the case of arbitrary partition of the arms, the asymptotic regret is equivalent in the worst-case to those achieved by the ``flat'' EXP3 algorithm, i.e., $\cO\left(\sqrt{kT\log k}\right)$ \cite{auer2002nonstochastic}.

The expected regret of \EXP32 can be bounded as follows: For each level of the hierarchy, we analyze its regret resulting from the relevant A-MAB algorithms. At the first level, the regret is of the parent A-MAB playing with $p$ virtual arms of the clusters, and with respect to $G_{\max}$. At the second level, we compute the regret within each cluster $P^i$ using its child A-MAB and with respect to the best cost of the cluster in hindsight $G_{\mathrm{maxP}^i}$. \revision{Formally,} 

{\em - First level}: the regret of not choosing the best cluster in hindsight $P^{+}$ during all times up to $T$:
{\small
\begin{align*}
    R_1^a \eqdef G_{\mathrm{maxP}}(T) - \sum_{t=1}^T c_t(P_t) = G_{\mathrm{maxP}}(T) - \sum_{i=1}^{p }\sum_{j=1}^{\card{T^i}} c_{t(i,j)}(P^i),
\end{align*}
}
plus the regret for not choosing the cluster $P^*$ and selecting within always arm $a^*$:
{\small
\begin{align*}
       R_1^b \eqdef G_{\mathrm{max}}(T) - G_{\mathrm{maxP}}(T)
\end{align*}
}
{\em - Second level}: the regret in each cluster not playing its best arm in hindsight all the time:
{\small
\begin{align*}
        R_2 \eqdef \sum_{i=1}^{p } \left( G_{\mathrm{maxP}^i}(T^i) - \sum_{j=1}^{\card{T^i}} c_{t(i,j)}(a_{t(i,j)}) \right)
\end{align*}
}
Formally, we can claim the following.

\begin{claim}\label{clm:nolose}
    For the \EXP32 algorithm that uses EXP3, the following holds:
\begin{enumerate}
\renewcommand{\labelenumi}{(\arabic{enumi})}
    \item \label{itm:1a} $\exp\left[R_1^a\right] \le \cO\left(\sqrt{p T \log p }\right)$.
    \item \label{itm:1b} $\exp\left[R_1^b\right] \le\cO\left(\sqrt{k^* T \log k^*}\right)$.
    \item \label{itm:2} $\exp\left[R_2\right] \le\cO\left(\sqrt{k T \log (k/p)}\right)$.
\end{enumerate}
    
\end{claim}

\begin{proof}[Proof of Claim \ref{clm:nolose}]
    The bound in \eqref{itm:1a} follows from running EXP3 on the clusters, each as an adversarial virtual arm. There are $p$ clusters, and the regret is with respect to playing the best cluster (virtual arm), $P^{+}$ in hindsight, so the results directly follow from the EXP3 bound.  
    For the bound in \eqref{itm:1b}, we note that,
\begin{align*}
    &\exp[G_{\mathrm{max}}\revisionKDD{(T)} - G_{\mathrm{maxP}}(T)] = G_{\mathrm{max}}\revisionKDD{(T)} - \exp[\sum_{t=1}^{T} c_t(P^{+})] \le \\
    &\le G_{\mathrm{max}}\revisionKDD{(T)} - \exp[\sum_{t=1}^{T} c_t(P^{*})]  
    \le\cO\left(\sqrt{k^* T \log k^*}\right), 
\end{align*}    
where the first inequality follows since cluster $P^{+}$ has the largest expected reward (by definition), at least as large as cluster $P^{*}$, and the last inequality follows from the EXP3 bound in the cluster $P^*$ for which $a^* \in P^*$. 
For the equation \eqref{itm:2} in the claim, the result follows from running EXP3 in each cluster,
{\small
\begin{align}
        &\exp\left[\sum_{i=1}^{p} \left( G_{\mathrm{maxP}^i}(T^i) - \sum_{j=1} \underset{t'=t(i,j)}{c_{t'}(a_{t'})}\right ) \right] \notag \le \\
        &\le \sum_{i=1}^{p }\cO\left(\sqrt{\card{P^i} \card{T^i} \log \card{P^i}}\right) 
        \le\cO\left(\sqrt{k T \log \frac{k}{p} }\right), \label{eq:R2}
\end{align}}
where the first inequality is the EXP3 bound, and the last inequality follows from the concavity of the function and under the constraints
$\sum \card{\cP^i} = k $ and $\sum \card{T^i} = T$, the worst case is $\forall i, \card{\cP^i}= \frac{k }{p }$ and $\card{T^i} = \frac{T}{p}$.
\end{proof}

Based on Claim \ref{clm:nolose}, we can bound the regret of \EXP32.

\begin{theorem}\label{thm:clusters}
    For $k$  nonstochastic arms, $T>0$ and a partition $\cP$ of the arms, the regret of the \EXP32 algorithm using EXP3 is bounded as follows:
    \begin{align*}
        G_{\max} - \exp[G_{\EXP32}] &\le 
        \cO\left(\sqrt{p T \log p }\right) + \cO\left(\sqrt{k^* T \log k^*}\right) \\
        &\;\;\;+\cO\left(\sqrt{k T \log (k/p)}\right)
    \end{align*}
where $p$ is the number of clusters and $k^*$ is the size of the cluster with the best arm in hindsight.
\end{theorem}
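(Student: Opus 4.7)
The plan is to combine the three expected regret bounds from the preceding Claim through a telescoping decomposition of the total regret.

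The first step is to write the pathwise identity
\begin{align*}
G_{\max}(T) - G_{\EXP32}(T)
&= \bigl[G_{\max}(T) - G_{\mathrm{maxP}}(T)\bigr] + \bigl[G_{\mathrm{maxP}}(T) - \sum_{t=1}^{T} c_t(P_t)\bigr] \\
&\quad {} + \bigl[\sum_{t=1}^{T} c_t(P_t) - G_{\EXP32}(T)\bigr],
\end{align*}
identifying the first bracket as $R_1^b$ and the second as $R_1^a$ directly from their definitions in the Claim. For the third bracket, I would rewrite both sums as double sums over cluster visits, $\sum_{t=1}^{T} c_t(P_t) = \sum_{i=1}^{p}\sum_{j=1}^{\card{T^i}} c_{t(i,j)}(P^i)$ and $G_{\EXP32}(T) = \sum_{i=1}^{p}\sum_{j=1}^{\card{T^i}} c_{t(i,j)}(a_{t(i,j)})$, and use the per-cluster inequality $\sum_{j} c_{t(i,j)}(P^i) \le \max_{a \in P^i}\sum_j c_{t(i,j)}(a) = G_{\mathrm{maxP}^i}(T^i)$ to conclude that the third bracket is upper bounded by $R_2$.

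The second step is to take expectations, use linearity, and the fact that $G_{\max}(T)$ is deterministic so $G_{\max}(T) - \exp[G_{\EXP32}(T)] = \exp[G_{\max}(T) - G_{\EXP32}(T)]$. Invoking the three bounds from the Claim then gives
\[
G_{\max}(T) - \exp[G_{\EXP32}(T)] \le \cO\bigl(\sqrt{p T \log p}\bigr) + \cO\bigl(\sqrt{p^* T \log p^*}\bigr) + \cO\bigl(\sqrt{k T \log (k/p)}\bigr),
\]
which is the stated theorem bound.

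The main obstacle is the interpretation and justification of the per-cluster inequality $\sum_{j} c_{t(i,j)}(P^i) \le G_{\mathrm{maxP}^i}(T^i)$ used to fold the third bracket into $R_2$: it hinges on viewing the virtual-arm reward $c_t(P^i)$ consistently as the reward realized by the child's play at the visits to cluster $i$, so that its per-cluster sum is dominated (at least in expectation) by the best fixed arm in hindsight on those visit times. Once this identification is in place, the theorem follows directly by summing the Claim's three bounds; there is no further probabilistic argument to make.
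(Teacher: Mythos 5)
Your proposal is correct and follows essentially the same route as the paper: the paper's proof of Theorem~\ref{thm:clusters} is precisely the decomposition of the total regret into $R_1^b + R_1^a + R_2$ (stated just before the Claim) followed by summing the Claim's three expected bounds. Your explicit telescoping, with the virtual-arm reward at a visit identified with the child's realized reward at that visit (under which the third bracket is dominated by $R_2$, in fact it is zero), is exactly the paper's implicit argument made precise.
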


Since $p, k^* \le k$, the worst case bound is in the same order as the flat case. Moreover, we will usually use $p, k^* \ll k$, and the contribution of the first two terms should be smaller than the third. With this, we can state the first main takeaway of our approach. 

\paragraph{{\small $\blacksquare$} Takeaway.} \textit{Using clustering does not ``hurt'' much the overall regret of the \emph{flat} approach.} 

\revision{We note that similar results can be obtained for other A-MAB algorithms like 
Tsallis-INF \cite{zimmert2021tsallis}, removing the logarithmic factors. 
Still, the question remains if we can benefit from clustering; the following subsection answers this question affirmatively.}


\begin{figure*} 
    \centering
    \subfigure[Regret vs. Steps]
    {\label{fig:2D_replay_time}
    \includegraphics[width=.24\linewidth]{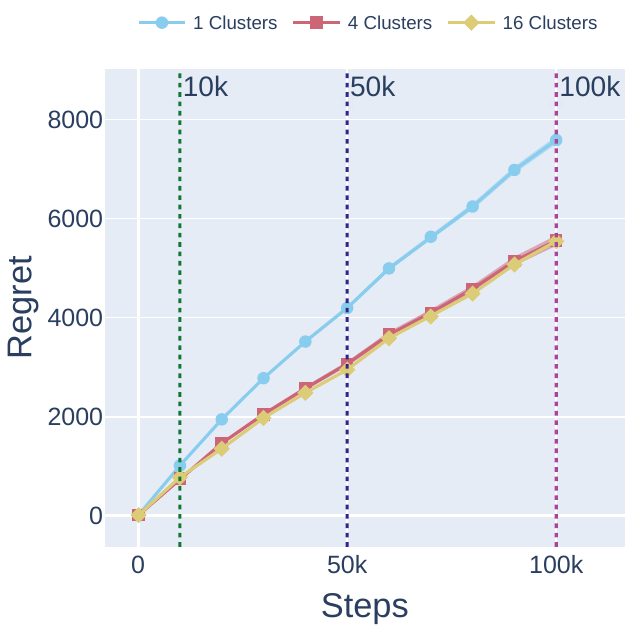}}
    \subfigure[Regret vs. Clusters]
    {\label{fig:2D_replay_clusters}
    \includegraphics[width=.24\linewidth]{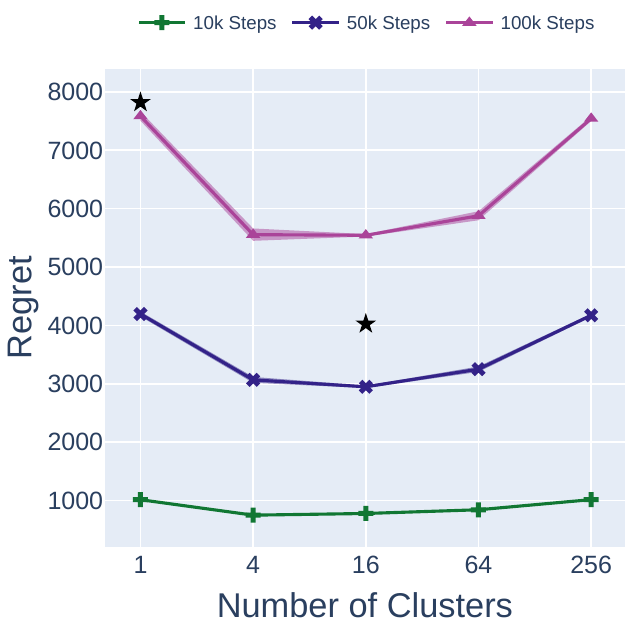}}
    \subfigure[Empirical Lipschitz]
    {\label{fig:lipshitz}
    \includegraphics[width=.24\linewidth]{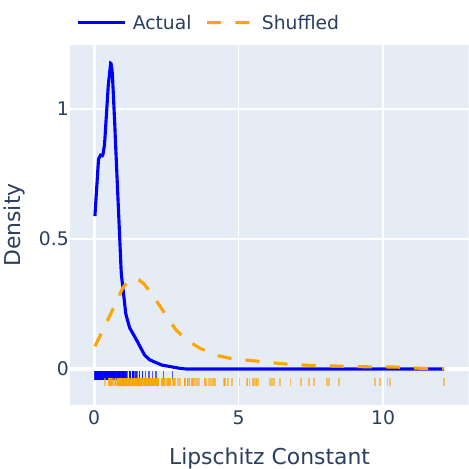}}
    \subfigure[Empirical mean]
    {\label{fig:snap_reward}
    \includegraphics[width=.24\linewidth]{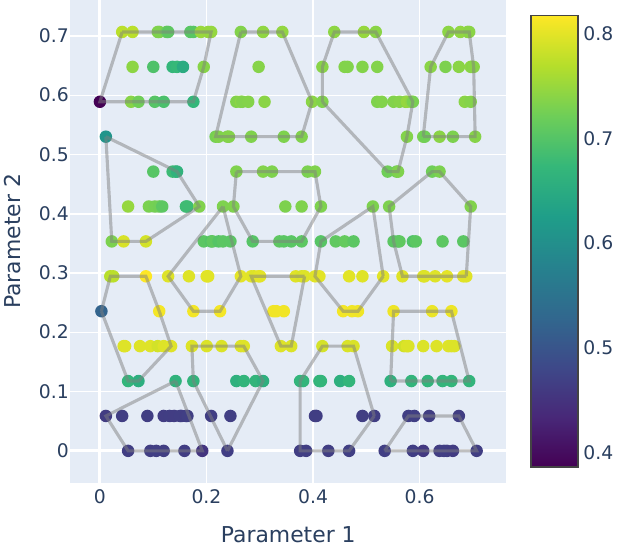}}
    \caption{\revision{Results from a real storage system. 
    (a) The cumulative regret of \EXP32 for reward from a real system. 
    (b) The regret as a function of the number of clusters.
    Star-shaped points are the regret results of runs on the real system.
    (c) Distribution of the estimation of the Lipschitz constant for the empirical reward function vs. a shuffled one.
    (d) Empirical mean of each arm on the storage system. The dashed line polygon represents the partition over arms used in \EXP32.}
    }
    \label{fig:ExpReplay}
  \Description{Four-panel real-storage results: regret vs.\ steps, regret vs.\ clusters, empirical-Lipschitz histogram, and empirical mean-reward heatmap with cluster boundaries.}
\end{figure*}


\subsection{Clustering with Lipschitz: Much to Gain}\label{sec:Clustering_with_Lipschitz}

In this subsection, we consider the case that the rewards of the arms form a metric space and, in particular, the simpler case where, within each cluster, we have a \emph{Lipschitz} condition ensuring that the rewards of all arms in a cluster are ``close'' to each other. For this simpler case, we do not even require a condition for the distance between clusters.
Such a situation fits, for example, in a system's parameters optimization problem where each arm corresponds to a set of parameters trying to optimize a certain objective (e.g., power, delay, etc.) \revision{(as we saw in Section \ref{sec:experimental_results})}. 
%
More formally, for the clustering scenario, we define an $\ell$-partition as,
\begin{definition}[$\ell$-partition]
An arms partition $\cP$ is an $\ell$-partition if the following Lipschitz condition holds:
\begin{align*}
    \forall P \in \cP, a,b \in P, ~~~ \vert c_t(a)-c_t(b) \vert \le \mathcal{D}(a,b) \le \ell, ~~\forall \text{ rounds } t.
\end{align*}    
\end{definition}

Following Theorem \ref{thm:clusters}, and since within each cluster, the \revision{expected} regret is at most $\ell$, we can easily state the following.

\begin{corollary}\label{cor:weakLip}
    For $k$  nonstochastic arms, $T>0$ and an $\ell$-partition $\cP$ of the arms, the regret of the \EXP32 algorithm using EXP3 is bounded as follows:
    \begin{align*}
        &G_{\max} - \exp[G_{\EXP32}] \le  
        \cO\left(\sqrt{p T \log p }\right) + \cO\left(\sqrt{k^* T \log k^*}\right) + T \cdot \ell
    \end{align*}
\end{corollary}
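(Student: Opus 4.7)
The plan is to reuse the same decomposition of the regret of \EXP32 that underlies Theorem \ref{thm:clusters}, namely $R_1^a + R_1^b + R_2$, and to keep the first two terms exactly as they were. Indeed, neither $R_1^a$ nor $R_1^b$ depends on the intra-cluster structure: $R_1^a$ is the parent-level EXP3 regret over $p$ virtual arms, which contributes $\cO(\sqrt{pT\log p})$, and $R_1^b$ is the gap between the best arm $a^*$ and the best cluster in hindsight, bounded as in the proof of Theorem \ref{thm:clusters} by $\cO(\sqrt{p^* T \log p^*})$ via EXP3 inside $P^*$. Both of these bounds carry over verbatim.

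The only component that needs to be re-bounded using the $\ell$-partition hypothesis is the second-level regret $R_2 = \sum_{i=1}^{p}\bigl(G_{\mathrm{maxP}^i}(T^i) - \sum_{j=1}^{|T^i|} c_{t(i,j)}(a_{t(i,j)})\bigr)$. In the proof of Theorem \ref{thm:clusters}, this was handled by summing the EXP3 regret inside each cluster, giving the $\cO(\sqrt{kT\log(k/p)})$ term. Here the idea is to completely bypass EXP3's own regret guarantee inside a cluster and instead use the per-round deterministic slack provided by \eqref{eq:lpart}: for every cluster $P^i$, every round $t$, and every pair $a,b\in P^i$, the rewards differ by at most $\ell$. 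In particular, if $a_i^\star \in P^i$ is an arm attaining $G_{\mathrm{maxP}^i}(T^i)$, then $c_{t(i,j)}(a_i^\star) - c_{t(i,j)}(a_{t(i,j)}) \le \ell$ for every $j$, because $a_{t(i,j)} \in P^i$ whenever the parent algorithm selects cluster $P^i$ at time $t(i,j)$.

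Summing this inequality over the $|T^i|$ visits to cluster $P^i$ gives a pointwise (no expectation needed) bound $G_{\mathrm{maxP}^i}(T^i) - \sum_{j} c_{t(i,j)}(a_{t(i,j)}) \le \ell\,|T^i|$, and summing over the clusters uses the identity $\sum_{i=1}^{p}|T^i| = T$ to yield $R_2 \le \ell\,T$ deterministically, hence $\exp[R_2] \le \ell T$. Adding this to the already-known bounds on $\exp[R_1^a]$ and $\exp[R_1^b]$ from Theorem \ref{thm:clusters} produces the claimed inequality.

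There is essentially no hard step: the Lipschitz-within-clusters hypothesis is strong enough to make the intra-cluster regret analysis routine, and the only care needed is to notice that the bound is deterministic (so taking expectations is free) and to ensure that $\sum_i |T^i| = T$ is applied correctly after the per-cluster bound. The whole proof is just Theorem \ref{thm:clusters} with the third summand replaced using \eqref{eq:lpart}.
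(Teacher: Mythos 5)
Your proposal is correct and follows essentially the same route as the paper: the paper's (one-line) argument is exactly to keep the two first-level terms from Theorem~\ref{thm:clusters} and to replace the intra-cluster term by $T\cdot\ell$, since by \eqref{eq:lpart} the per-round regret within any visited cluster is at most $\ell$. Your write-up merely makes explicit the pointwise bound $G_{\mathrm{maxP}^i}(T^i)-\sum_{j}c_{t(i,j)}(a_{t(i,j)})\le \ell\,\card{T^i}$ and the summation $\sum_i \card{T^i}=T$, which is exactly what the paper leaves implicit.
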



Note that the above bound does not assume any contribution to the execution of EXP3 within clusters, resulting from the $\ell$-partition. Using an appropriate A-MAB algorithm within each Lipschitz cluster potentially allows us to improve the result.

Let ALB$^+$ denote an adaptive Lipschitz bandits algorithm satisfying the following property: 
\begin{property}\label{prp:alb}
For the Adversarial Lipschitz MAB setting with $k'$ arms satisfying the Lipschitz condition (Eq. \ref{eq:Lipschitz}) and a maximum distance $\ell$, the regret is 
bounded by $\cO\left(\ell \cdot \sqrt{k' T \log k'} \right)$.    
\end{property}

\revisionKDD{Property \ref{prp:alb} is a strong property, as it depends on the reward structure. However, algorithms satisfying this property exist, for example by assuming side-information. A concrete example is the EXP3-SET algorithm \cite{alon2017nonstochastic} which utilizes a feedback graph $G_t$, (as the side-information), which is directed and time-changing. If $G_t=G$ is symmetric, and $\alpha(G)$ is the independence number of $G$, EXP3-SET achieves a regret of $\cO(\sqrt{\alpha(G)T\log k'})$.}

\revisionKDD{To satisfy Property \ref{prp:alb}, we construct a feedback graph $G$ where an edge exists between two arms in $G$ iff for two arms $a, a'$, $\cD(a, a') \le \epsilon$ for an appropriate selection of $\epsilon$. Under the Lipschitz condition, observing the reward of one arm reveals information about its neighbors in $G$. Consequently, if the independence number satisfies $\alpha(G) \le \ell^2 k'$, EXP3-SET fulfills Property \ref{prp:alb}.
}

\revisionKDD{Note that if we assume a single cluster and a flat ALB$^+$ algorithm, then $\ell$ might be large (e.g., 1). But when we break the arms into clusters, we may have an $\ell$-partition, where within each cluster $\ell$ might be significantly smaller. The following theorem formalized the potential gains. 
}

\begin{theorem}\label{thm:ClustersLipschitz}
    For $k$  nonstochastic arms, $T>0$ and an $\ell$-partition $\cP$ of the arms, the regret of the \EXP32, using an ALB$^+$ algorithm can be bounded as follows:
    \begin{align}
        G_{\max} - \exp[G_{\EXP32}] &\le 
        \cO\left(\sqrt{p T \log p }\right)
         +\cO\left(\ell \sqrt{k^* T \log k^*}\right) \notag \\
         &\;\;\;+\cO\left(\ell \cdot \sqrt{k T \log (k/p) }\right), \label{eq:theorem}
    \end{align}
where $p$ is the number of clusters and $k^*$ is the size of the cluster with the best arm in hindsight.
\end{theorem}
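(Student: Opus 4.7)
The plan is to re-use the same three-way decomposition of the expected regret that was introduced just above for Theorem 4.2, namely $\exp[R_1^a] + \exp[R_1^b] + \exp[R_2]$, and then swap in the ALB$^+$ guarantee of Property~4.4 wherever the Lipschitz structure within a cluster can be exploited. The key observation driving the result is that the parent-level problem has no inter-cluster Lipschitz structure (we only assume an $\ell$-partition, which constrains variation \emph{within} each cluster), so the parent regret term $\exp[R_1^a]$ is necessarily unchanged from Theorem~4.2, while the child-level terms $\exp[R_1^b]$ and $\exp[R_2]$ should each acquire a factor of $\ell$ when we use ALB$^+$ in place of EXP3.

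Concretely, I would argue as follows. For $\exp[R_1^a]$, the parent AB runs on $p$ virtual arms as before, so the standard EXP3-type bound directly gives $\cO(\sqrt{p T \log p})$. For $\exp[R_1^b]$, I would mimic verbatim the chain in the proof of Claim~4.1\eqref{itm:1b}: by definition $P^{+}$ is the best cluster in hindsight, so $\exp[\sum_t c_t(P^{+})] \ge \exp[\sum_t c_t(P^{*})]$, which reduces the problem to bounding the counterfactual regret of a single ALB$^+$ instance deployed on $P^*$ for all $T$ rounds. Since $P^*$ has $p^*$ arms and inherits the $\ell$-Lipschitz condition from the $\ell$-partition, Property~4.4 yields $\cO(\ell \sqrt{p^* T \log p^*})$.

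For $\exp[R_2]$, I would apply Property~4.4 individually to each cluster: the child ALB$^+$ on $P^i$ runs over $\card{T^i}$ rounds with $\card{P^i}$ arms and Lipschitz constant $\ell$, hence contributes at most $\cO(\ell \sqrt{\card{P^i}\,\card{T^i}\log \card{P^i}})$. Summing and invoking exactly the same Cauchy--Schwarz/concavity argument as in display~\eqref{eq:R2}, under the constraints $\sum_i \card{P^i} = k$ and $\sum_i \card{T^i} = T$ the worst case is $\card{P^i} = k/p$ and $\card{T^i} = T/p$, giving $\cO(\ell \sqrt{k T \log (k/p)})$. Adding the three terms produces \eqref{eq:theorem}.

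The only nontrivial step is the one for $\exp[R_1^b]$, because ALB$^+$'s guarantee is being invoked against a purely hypothetical play of cluster $P^*$ for all $T$ rounds, rather than the actual $\card{T^*}$ rounds on which the parent selected $P^*$; this is the same sleight of hand used in Claim~4.1 and is legitimate because we only need an upper bound on $G_{\mathrm{max}} - \exp[\sum_t c_t(P^{+})]$, which is itself upper-bounded by replacing $P^{+}$ with $P^{*}$. A secondary issue to verify is that the ALB$^+$ bound is of the form $\cO(\ell\sqrt{\cdot})$ in both arms and rounds, which is precisely what Property~4.4 states and what makes the concavity step in $\exp[R_2]$ go through; if the bound had a different dependence on $k$ or $T$, the worst-case balancing across clusters would need to be redone. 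With these two checks in place, the proof is essentially a term-by-term recycling of Claim~4.1's proof with EXP3 replaced by ALB$^+$ inside each cluster.
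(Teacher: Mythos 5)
Your proposal is correct and follows essentially the same route as the paper's own (much terser) proof: reuse the three-term decomposition $\exp[R_1^a]+\exp[R_1^b]+\exp[R_2]$ behind Theorem~\ref{thm:clusters} and plug Property~\ref{prp:alb} into the within-cluster terms, keeping the same worst-case balancing $\card{P^i}=k/p$, $\card{T^i}=T/p$ for the sum in Eq.~\eqref{eq:R2}. Your explicit treatment of the $\exp[R_1^b]$ term (replacing $P^{+}$ by $P^{*}$ and invoking the ALB$^+$ bound there to obtain the factor $\ell$) is exactly what the paper's one-line proof leaves implicit, so no gap remains.
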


\revision{
\begin{proof}[Proof of Theorem \ref{thm:ClustersLipschitz}]
    The proof follows from Theorem \ref{thm:clusters}, the proof of Eq. \ref{eq:R2}, and Property \ref{prp:alb}.
    We can improve Eq. \ref{eq:R2} by plugging Property \ref{prp:alb} so for each cluster $i$ its internal regrate is bounded by $\cO\left(\ell \cdot \sqrt{\card{\cP^i} \card{T^i} \log \card{\cP^i}} \right)$. 
    In turn, the worst case for the sum $\sum_{i=1}^{p }\cO\left(\ell \sqrt{\card{P^i} \card{T^i} \log \card{P^i}}\right)$ is still the case where $\forall i, \card{\cP^i}= \frac{k}{p}$ and $\card{T^i} = \frac{T}{p}$ and the results follows.
\end{proof}
}


Following Theorem \ref{thm:ClustersLipschitz} we can state  the second main takeaway of the paper. 

\paragraph{{\small $\blacksquare$} Takeaway.} \textit{The results hold for nonstochastic arms, which keeps the Lipschitz condition. 
Comparing the upper bound of the ``flat'' (single cluster) ALB$^+$ algorithm, we can achieve improvement when the number of clusters and the size of the ``best'' cluster are relatively small. i.e.,  $p \ll k $, and $k^* \ll k $, and when the \revisionKDD{$\ell$-partition is sufficiently tight,}
i.e., $\ell \le \frac{1}{\sqrt{k}}$.}

As a concrete example, we can state the following

\begin{corollary}
  Given an $\ell$-partition $\cP$, with $\ell \le \frac{1}{\sqrt[4]{k}}$ and $\sqrt{k}$ clusters, each of size $\sqrt{k}$, the \EXP32, using an ALB$^+$ algorithm  can be bounded as $\cO(\sqrt{k^{1/2} T \log k})$, which is $\Omega(\sqrt[4]{k})$ time better than the flat ALB$^+$ algorithm.
\end{corollary}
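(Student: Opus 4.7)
The plan is to obtain the corollary as a direct specialization of Theorem \ref{thm:ClustersLipschitz}, so the work is essentially bookkeeping: substitute the given values $p=\sqrt{k}$, $p^*=\sqrt{k}$, and $\ell \le k^{-1/4}$ into the three-term bound of \autoref{eq:theorem}, and verify that each term is dominated by $\cO\bigl(\sqrt{\sqrt{k}\,T\log k}\bigr)$. Since Theorem \ref{thm:ClustersLipschitz} has already done the heavy lifting of bounding the cluster-level and intra-cluster regrets, no new analytical machinery is needed.

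Concretely, I would handle the three terms of \autoref{eq:theorem} separately. The first term, $\cO\bigl(\sqrt{p\,T\log p}\bigr)$, becomes $\cO\bigl(\sqrt{\sqrt{k}\,T\log\sqrt{k}}\bigr) = \cO\bigl(\sqrt{\sqrt{k}\,T\log k}\bigr)$ after absorbing the $\tfrac{1}{2}$ inside the logarithm into the $\cO(\cdot)$. The third term, $\cO\bigl(\ell\sqrt{k\,T\log(k/p)}\bigr)$, becomes $\cO\bigl(k^{-1/4}\sqrt{k\,T\log\sqrt{k}}\bigr) = \cO\bigl(k^{1/4}\sqrt{T\log k}\bigr) = \cO\bigl(\sqrt{\sqrt{k}\,T\log k}\bigr)$, matching the first. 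The second term, $\cO\bigl(\ell\sqrt{p^*\,T\log p^*}\bigr)$, becomes $\cO\bigl(k^{-1/4}\cdot k^{1/4}\sqrt{T\log k}\bigr) = \cO\bigl(\sqrt{T\log k}\bigr)$, which is of strictly lower order. Summing the three, the bound collapses to $\cO\bigl(\sqrt{\sqrt{k}\,T\log k}\bigr)$ as claimed.

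For the improvement factor, I would invoke Property \ref{prp:alb} directly applied to the flat (single-cluster) case. With $p=p^*=1$ and $\ell$ possibly as large as $1$ (the ambient Lipschitz constant for $K$), the flat ALB$^+$ regret bound evaluates to $\cO\bigl(\sqrt{k\,T\log k}\bigr)$. Taking the ratio gives $\sqrt{k/\sqrt{k}} = k^{1/4}$, establishing the $\Omega(\sqrt[4]{k})$ improvement factor.

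I do not expect any genuine obstacle. The only subtlety worth flagging is the choice of benchmark for the ``flat'' comparison: the improvement claim implicitly assumes the global Lipschitz constant of $K$ is at least a positive constant (so that the flat ALB$^+$ sees $\ell = \Theta(1)$), while the $\ell$-partition refines it to $\ell \le k^{-1/4}$ within each cluster; this is the standard setting in which clustering helps, and it is what the takeaway preceding the corollary also assumes. I would state this assumption explicitly in the proof so the comparison is unambiguous.
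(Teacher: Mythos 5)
Your proposal is correct and follows essentially the same route as the paper: the corollary is a direct specialization of Theorem \ref{thm:ClustersLipschitz}, obtained by substituting $p=p^*=\sqrt{k}$ and $\ell \le k^{-1/4}$ into \autoref{eq:theorem} and comparing against the flat ALB$^+$ bound $\cO\bigl(\ell\sqrt{kT\log k}\bigr)$ with $\ell=\Theta(1)$, exactly as the paper's surrounding discussion intends. Your explicit remark that the flat benchmark assumes a global Lipschitz constant of order one is a worthwhile clarification and is consistent with the paper's own observation that for a single cluster $\ell$ may be as large as $1$.
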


\subsection{\revisionCR{Discussion: Theory and Practice}}

\revisionCR{We close this section by relating the assumptions and parameters of our theoretical analysis to the empirical setting, and by discussing how each affects \EXP32 in practice.}

\revisionCR{
\para{Property~\ref{prp:alb}.}
Although this property is required for our improved theoretical bound, our empirical evaluation deliberately uses Tsallis-INF, which does not formally satisfy Property~\ref{prp:alb} and, unlike EXP3-SET, does not assume any side-information on the relations between the arms. We chose Tsallis-INF as it is the state-of-the-art ``best-of-both-worlds'' algorithm, offering superior stability and performance in practice. As we showed in the previous sections, clustering the arms and using EXP3-based or Tsallis-INF algorithms for parent and child levels still significantly improves performance, demonstrating that the hierarchical \EXP32 framework is robust even with off-the-shelf algorithms that do not formally satisfy this property.
}

\revisionCR{
\para{Practical implications of $\ell$, $p$, and $k^*$.}
The parameters $\ell$, $p$, and $k^*$ in our analysis carry significant practical weight. Specifically, $\ell$ represents the reward smoothness within a cluster; in real-world settings, proximate metric bandits often exhibit similar performance, yielding a small $\ell$ that allows our hierarchical approach to exploit this structure. This is empirically validated in our production storage system (Figure~\ref{fig:lipshitz}), which exhibits a low $\ell$ value. The parameters $p$ and $k^*$ are governed by the clustering configuration, highlighting a theoretical trade-off between a large number of clusters $p$ (and thus a smaller $k^*$) or vice-versa. This trade-off is corroborated by the U-shape observed in our empirical experiments (e.g., Figure~\ref{fig:2D_non_clusters}). Compared to the flat approach, our framework incurs a computational overhead that may be non-negligible for small numbers of arms (small $k$), as discussed in our runtime analysis (Figure~\ref{fig:run-times}).
}



\section{Conclusion}
\revisionKDD{This paper introduced a practical hierarchical MAB framework to address the challenge of optimizing decisions}
in dynamic environments with large, 
structured action spaces, such as automated system configuration. We proposed \EXP32, a hierarchical algorithm that leverages the clustered, Lipschitz nature of the action space. 
Our theoretical analysis demonstrated that \EXP32 achieves robust worst-case performance, matching traditional methods, 
while offering significant improvements when the underlying structure is favorable, as demonstrated by an improved regret bound under Lipschitz conditions. Importantly, these theoretical findings were validated through both simulations and experiments on a real storage system, confirming \EXP32's ability to achieve lower regret and faster convergence in practice. Future work includes exploring adaptive clustering techniques, multilevel hierarchies of clusters, and studying distributed settings.

\bibliographystyle{ACM-Reference-Format}
\bibliography{references}

\appendix

\section{Technical Appendix}

\subsection{Computing Infrastructure Used for Running Experiments}

All experiments were run on a standard MacBook Pro laptop with 
Apple M2 Pro Chip, 12 Cores, and 32 GB Memory.






\subsection{1D Stochastic, Metric Example}
\label{sec:ablation}

Figure \ref{fig:sim1D} presents the results of the one-dimensional setup, where the $k$ arms are equally spaced on the range $[0,1]$.
Figure \ref{fig:1D_time} shows the regret as a function of time and compares the flat case ($p=1$) to \EXP32 with $p=4$ and $p=16$ clusters.  
Figure \ref{fig:1D_clusters} presents the regret at $T=10k, 50k$, and $100k$ for different numbers of clusters.
Both subfigures demonstrate that \EXP32 has significantly lower regret for the best clustering, $p=16$, relative to the flat baseline: \roey{$1618\pm$30} vs. \roey{$4866\pm54$}, respectively (t-test p-value=\roey{$9\times10^{-25}$}), an improvement of \roey{66.8}\%.  Moreover, it shows that \EXP32 is not much worse (only for $p=2$, the results were a bit lower due to the symmetry of the problem for $a=\frac{1}{2}$). 

Another important property is the U shape appearing in Figure \ref{fig:1D_clusters}. This is a result of Eq. \eqref{eq:theorem} from Theorem \ref{thm:ClustersLipschitz} where the first term is monotone increasing with $p$, while the second term is monotone decreasing with $p$, leading to the U shape when the third term is not dominating.


\subsection{2D Stochastic, Metric Example}
In order to verify the extendability to a higher dimension, we repeated the experiment for two dimensions $d=2$, i.e., we are trying to optimize two parameters. \revisionCR{Specifically, $k=256$ arms are placed on a $16\times 16$ grid in $[0,1]^2$, and the $\sqrt{k}=16$ clusters used by \EXP32 correspond to the natural $4\times 4$ partition into contiguous square tiles, each containing $16$ arms.} Figure \ref{fig:sim2Dsetup} depicts the arms' location in the metric space where each marker corresponds to a different cluster, and the color map indicates the expected reward. 
The Lipschitz condition is held by the setup we use (Section \ref{sec:empirical}).
Figure \ref{fig:sim2D} again depicts the benefit of \EXP32 over using a flat Tsallis-INF, Figure \ref{fig:2D_time} as a function of time and Figure \ref{fig:2D_clusters} as a function of the number of clusters. For \EXP32 the regret at $T=100$k was $\roey{1675\pm213}$ and for the flat Tsallis-INF $\roey{4859\pm72}$, respectively (t-test p-value=$\roey{7.5\times10^{-14}}$), an improvement of about \roey{66.5}\%.

\subsection{Effect of Number of Arms}
The previous two sections clearly demonstrated the benefit of \EXP32 over the flat Tsallis-INF. In order to study the effect of the number of arms $k$ on that benefit we iterated over $k=2^j$ where $j\in\revisionKDD{\{4,...,12\}}$. Figure \ref{fig:effect_of_arms} measures the ratio of the \EXP32 regret relative to the flat one and shows that the benefit increases as the number of arms increases. This is indeed expected from the results Section \ref{sec:Clustering_with_Lipschitz}.  We note also that, as expected, a larger dimension will require more arms to show the benefit.

\begin{figure}
    \centering
    \subfigure[Regret vs. Steps]
    {\label{fig:1D_time}
    \includegraphics[width=.48\linewidth]{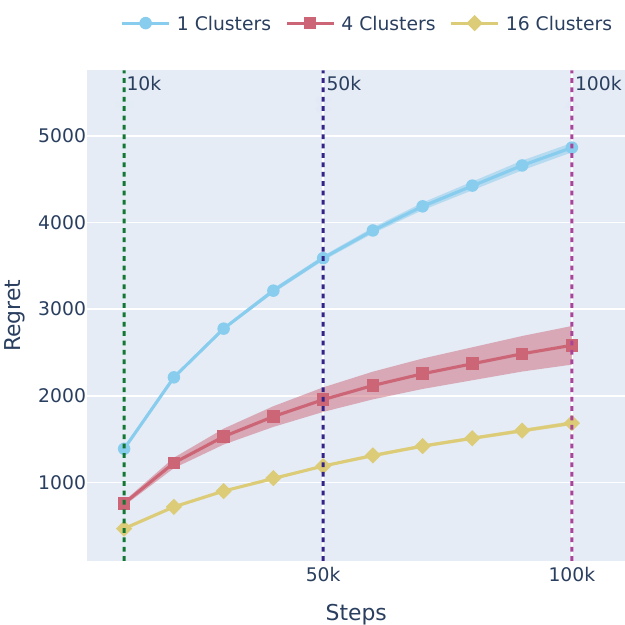}}
    \subfigure[Regret vs. Clusters]
    {\label{fig:1D_clusters}
    \includegraphics[width=.48\linewidth]{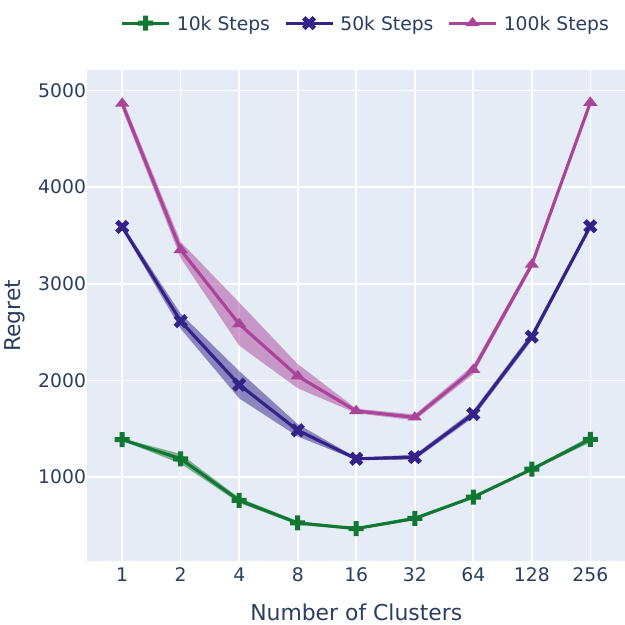}}
    \caption{Cumulative regret for \EXP32 vs. flat Tsallis-INF ($k=1$ and $k=256$) for the 1D setup. (a) \roey{As a function of steps. (b) As a function of the number of clusters.}}
    \label{fig:sim1D}
    \Description{Two-panel plot of cumulative regret vs.\ steps and vs.\ number of clusters for the 1-D setup.}
\end{figure}

\begin{figure}
    \centering
    \subfigure[Regret vs. Steps]
    {\label{fig:2D_non_time_random}
    \includegraphics[width=.45\linewidth]{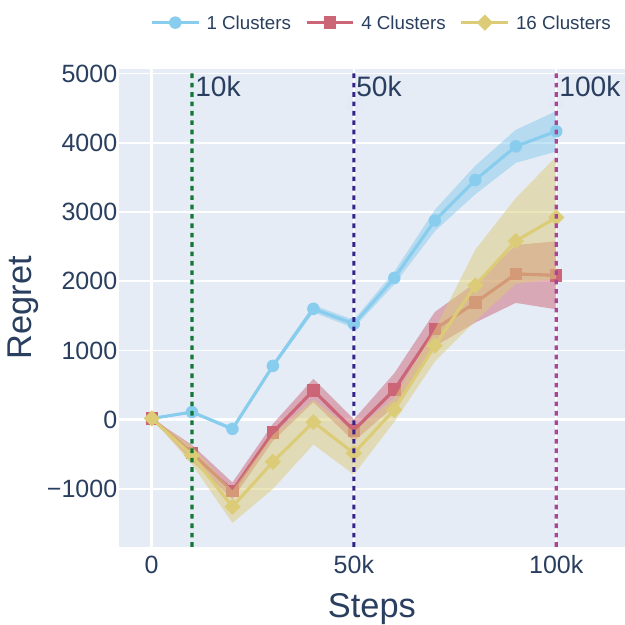}}
    \subfigure[Regret vs. Clusters]
    {\label{fig:2D_non_clusters_random}
    \includegraphics[width=.45\linewidth]{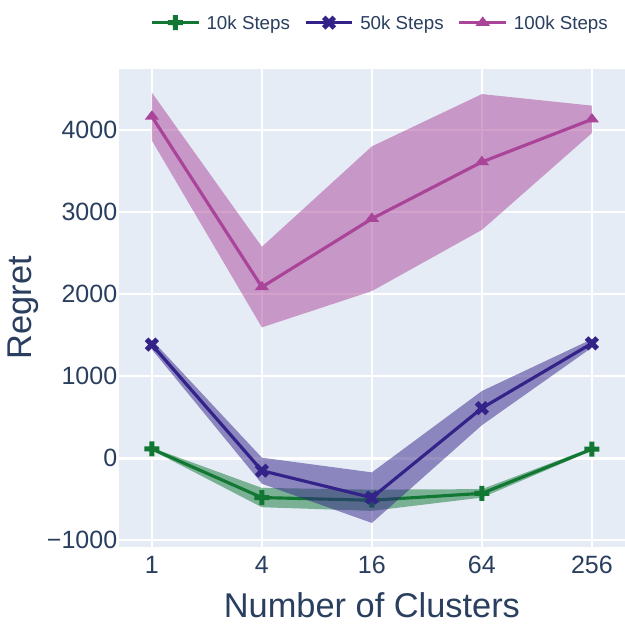}}
        \caption{\revision{The ``Traveling'' Optimal Arm, Nonstochastic (adversarial) and Metric Space Scenario with {\bf random clustering}. (a) The cumulative regret for ABoB as a function of steps. (b) The cumulative regret for ABoB as a function of the number of clusters.}}
    \label{fig:simNonstResultsRandomClustering}
    \Description{Two-panel plot: regret vs.\ steps and vs.\ number of clusters for the traveling-arm scenario with random clustering.}
\end{figure}

\subsection{Effect of Random Clustering}

Our setup for the nonstochastic (adversarial) and metric spaces scenario (``traveling arm'') in Section \ref{sec:empirical} considers clustering of the arms by the Lipschitz condition of the 2D metric. Our theoretical results indicate that for any clustering we cannot lose too much.
Figure \ref{fig:simNonstResultsRandomClustering} repeats the same experiment of Figure \ref{fig:simNonstResults} but with random partition of the arms. As we can see, also in this case the clustering improves the regret, but to a smaller extent.

\begin{figure}[H]
    \centering
    \subfigure[Regret vs. Steps]{
    \includegraphics[width=0.46\linewidth]{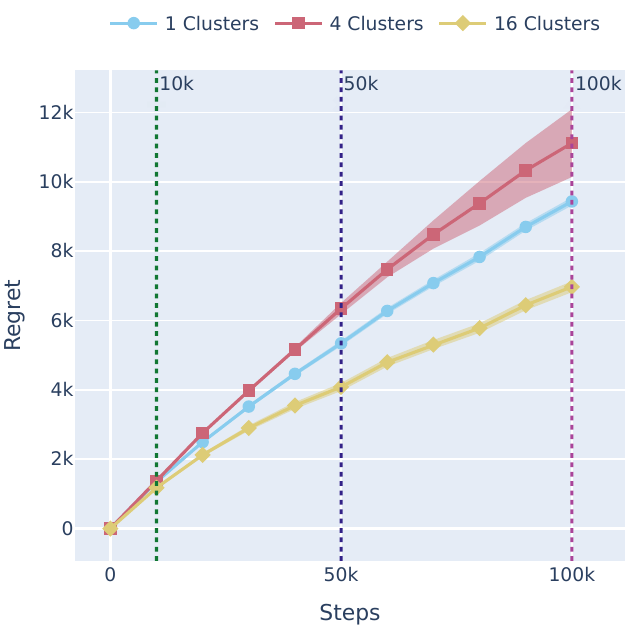}
    }
    \subfigure[Regret vs. Clusters]{
    \includegraphics[width=0.46\linewidth]{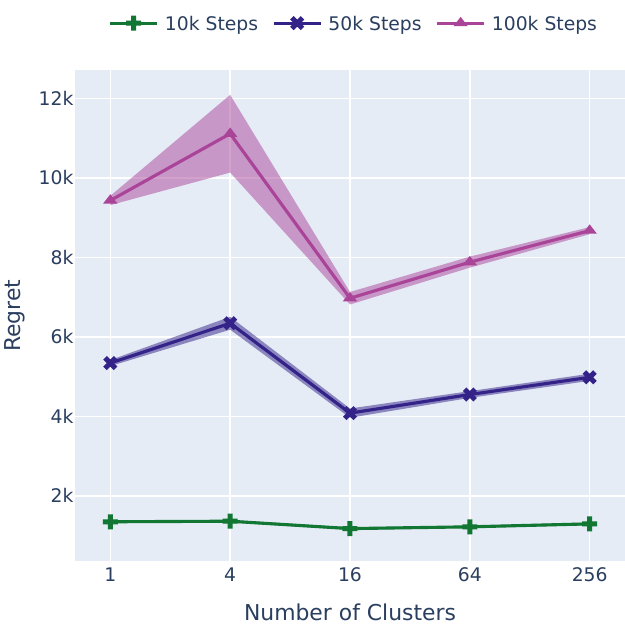}
    }
    \caption{Same as Figure \ref{fig:ExpReplay} but using EXP3}
    \label{fig:ExpReplayEXP3}
    \Description{Two-panel plot: regret vs.\ steps and vs.\ number of clusters for the real-storage replay using EXP3.}
\end{figure}

The best result obtained by using 4 clusters, where \EXP32 achieves a regret of $2086\pm491$ compared to the flat Tsallis-INF baseline which achieves a regret of $4166\pm291$, (t-test p-value $10^{-8}$). This is about 50\% improvement, but less than the 91\% improvement achieved by the Lipschitz-based clustering.
\revisionCR{This empirically validates the ``Not Much to Lose'' guarantee of Theorem~\ref{thm:clusters}: even when clusters are constructed without any geometric information about the arms (i.e., the Lipschitz structure is broken at the partition level), \EXP32 still matches or beats the flat baseline, because the parent A-MAB can identify the cluster containing the optimal arm. The remaining gap to the Lipschitz-aware partition (Figure~\ref{fig:simNonstResults}) is precisely the ``Much to Gain'' term in Theorem~\ref{thm:ClustersLipschitz}, governed by the in-cluster Lipschitz constant $\ell$.}

\begin{figure*}
    \centering
    \subfigure[2D setup]{\label{fig:sim2Dsetup}
    \includegraphics[width=0.26\linewidth]{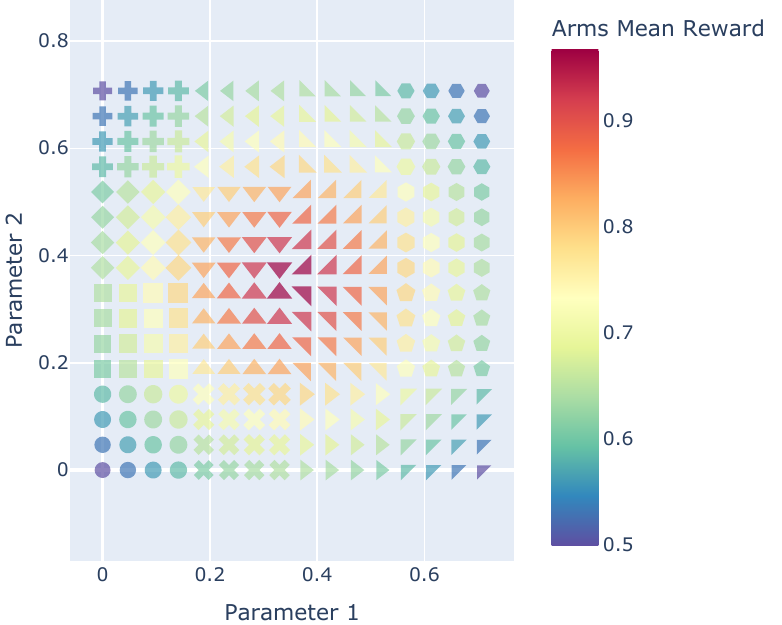}}
    \subfigure[Regret vs. Steps]
    {\label{fig:2D_time}
    \includegraphics[width=.23\linewidth]{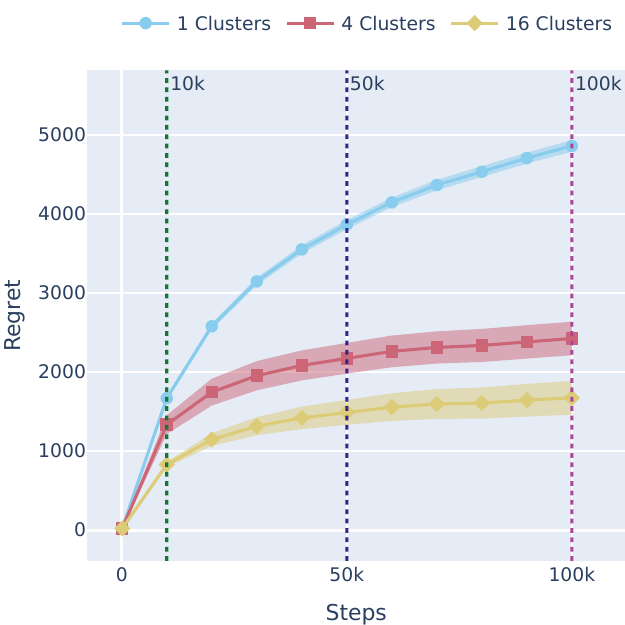}}
    \subfigure[Regret vs. Clusters]
    {\label{fig:2D_clusters}
    \includegraphics[width=.23\linewidth]{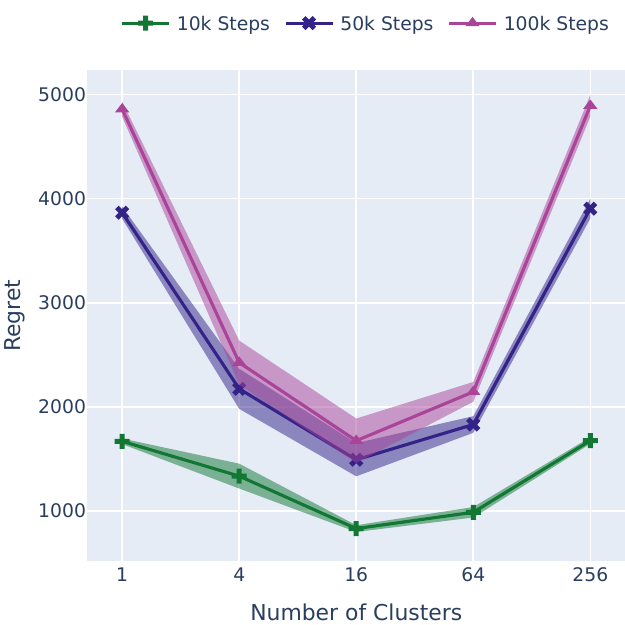}}
    \subfigure[Numbers of arms]{\label{fig:effect_of_arms}
    \includegraphics[width=0.22\linewidth]{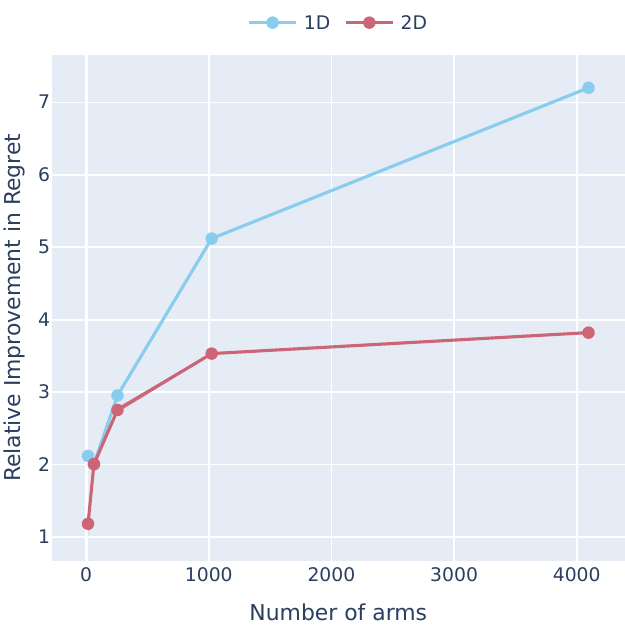}}
    \caption{
    Cumulative regret for \EXP32 vs. flat Tsallis-INF ($k=1$ and $k=256$) for the 2D setup. 
    (a) A 2D stochastic setup where each marker is an arm with a different reward, and arms are partitioned into clusters. Rewards and the partition are according to Section \ref{sec:empirical}. 
    (b) As a function of steps. 
    (c) As a function of the number of clusters.
    (d) Effect of the number of arms on the relative regret of \EXP32 to flat Tsallis-INF for the 1D and 2D cases. More arms imply a larger ratio.
    }
    \label{fig:sim2D}
    \Description{Four-panel plot: 2-D arm-space scatter with cluster boundaries, regret vs.\ steps, regret vs.\ number of clusters, and relative regret as a function of the number of arms.}
\end{figure*}

\subsection{Bandit Algorithm}
In our results, we have considered the use of the Tsallis-INF Multi-Armed Bandit algorithm in our hierarchical approach. \revisionCR{To verify that the hierarchical advantage is not specific to Tsallis-INF, we re-ran the real-system replay using EXP3~\cite{auer2002nonstochastic} as both the parent and the child bandit. Figure~\ref{fig:ExpReplayEXP3} shows the regret achieved by flat EXP3 versus hierarchical EXP3 (\EXP32). The hierarchical variant continues to dominate the flat one across the full range of cluster counts, even though EXP3 is, in absolute terms, a weaker learner than Tsallis-INF in this setting. This indicates that the gain from clustering is largely orthogonal to the choice of base bandit: any reasonable adversarial MAB that satisfies the standard $\tilde O(\sqrt{kT})$ regret will inherit the \EXP32 speed-up under the same Lipschitz assumption, with absolute performance scaling with the quality of the base learner.}

\subsection{Stochastic Setup on a Real System}
In our experimental results (Section ~\ref{sec:experimental_results}), we have considered a scenario where the system experiences dynamically changing workloads, which puts us in the non-stochastic setting. Here we show that under a static workload (i.e. stochastic setting) we achieve better performance in regret. Under this scenario, Figure \ref{fig:ExpReplayStatic} shows that the hierarchical using 16 clusters achieves a regret of $3708\pm45$ (compared to $5349\pm46$ in the dynamic setting) which is dominant over the flat approach achieving regret of $5587\pm26$ (compared to $7568\pm43$ in the dynamic setting) (t-test p-value $1.19\times 10^{-22}$).

\subsection{Validation: In-cluster Similarity}
\revisionKDD{In this section we examine our setup of Section \ref{sec:experimental_results}.
We ask, could it be the case that our clustering was such that in each cluster all arms are very "close", so "close" that even a simple uniform sampling bandit at the child level will result in good performance? 
This is clearly not the case for the stochastic and nonstochastic scenarios from \cite{zimmert2021tsallis}. Both are cases of a ``needle in a haystack'', so clearly a uniform sampling as the child bandit will fail. But what about the scenario of the traveling arm in the 2-D grid? Figure \ref{fig:Tsalis-uniform} presents the results of such an experiment. It compares the run of Tsallis-Tsallis \EXP32 with a Tsallis-Uniform \EXP32, and shows that each cluster has arms with significant variation. This is true since o.w. the results of the Tsallis-Uniform \EXP32 would not deviate as much from the Tsallis-Tsallis \EXP32.}

\begin{figure}[H]
    \centering
    \includegraphics[width=.6\linewidth]{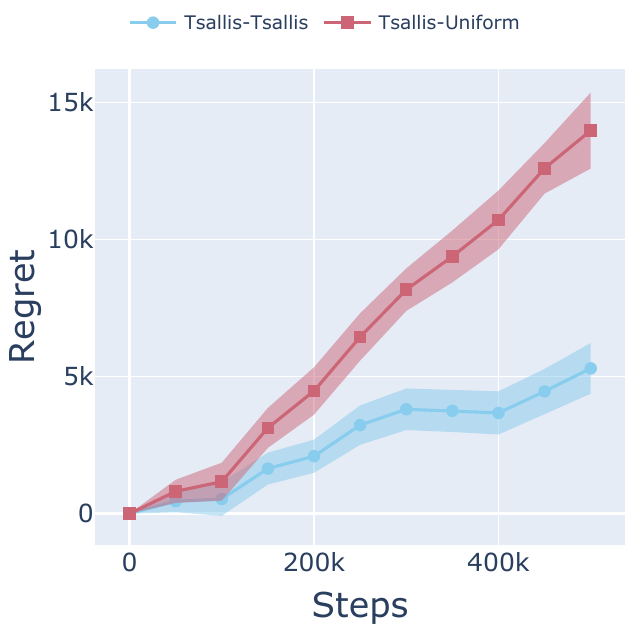}
    \caption{Traveling-arm 2-D grid scenario: Tsallis-Tsallis \EXP32 vs.\ Tsallis-Uniform \EXP32. The gap rules out the hypothesis that clusters are internally uniform enough for a uniform child bandit to suffice, confirming significant intra-cluster reward variation.}
    \label{fig:Tsalis-uniform}
    \Description{Cumulative-regret curves comparing Tsallis--Tsallis ABoB with Tsallis--Uniform ABoB on the 2-D traveling-arm grid.}
\end{figure}

\begin{figure}[H]
    \centering
    \subfigure[Regret vs. Steps]
    {\label{fig:2D_replay_time_stochastic}
    \includegraphics[width=.48\linewidth]{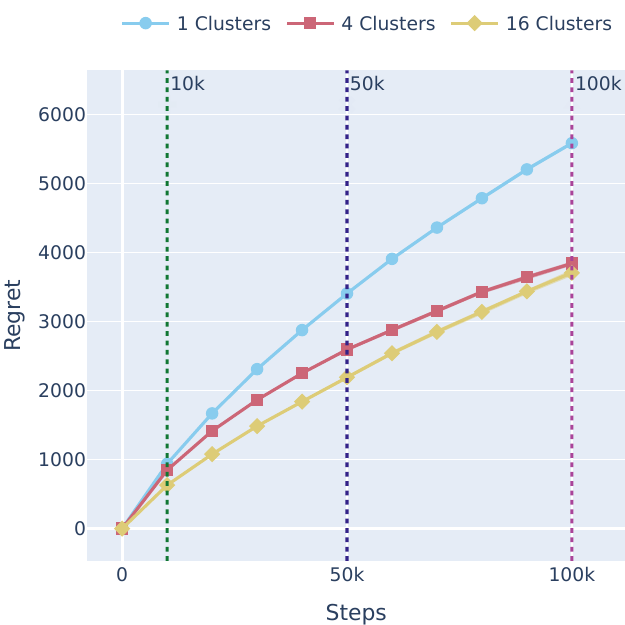}}
    \subfigure[Regret vs. Clusters]
    {\label{fig:2D_replay_clusters_stochastic}
    \includegraphics[width=.48\linewidth]{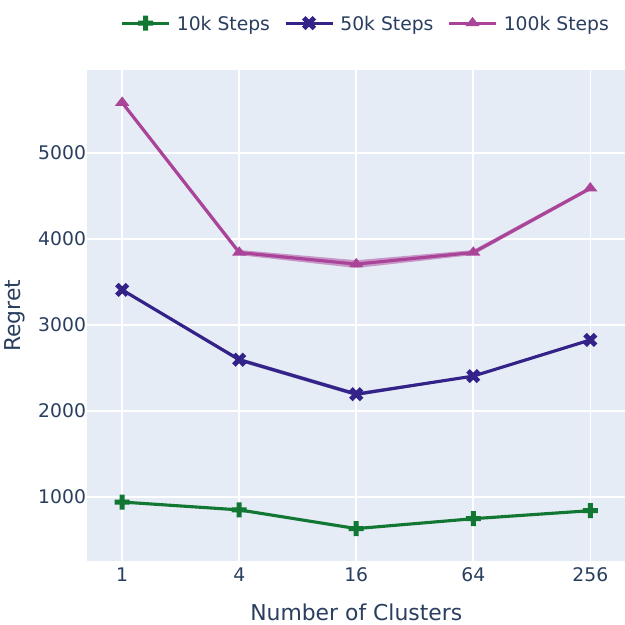}}
    \caption{Same as Figure \ref{fig:ExpReplay} but the system experiences a static workload.}
    \label{fig:ExpReplayStatic}
    \Description{Two-panel plot: regret vs.\ steps and vs.\ number of clusters for the real-storage replay under a static workload.}
\end{figure}

\subsection{Different Parent-Child Algorithms}

\revisionCR{To assess the sensitivity of \EXP32 to the choice of base bandit at each level of the hierarchy, we ran every parent$\times$child pairing over four representative MAB algorithms: EXP3, EXP3++, Tsallis-INF, and UCB. The resulting two grids appear in Figure~\ref{fig:stochasticParent-Child} (stochastic environment) 
 and Figure~\ref{fig:adversarialParent-Child} (nonstochastic environment).
Each row corresponds to a fixed parent and each column to a fixed child algorithm, with $p=1$ (flat, child only), $p=\sqrt{k}=16$ clusters (\EXP32), and $p=k=256$ (parent only).} Note that for the same parent and child algorithm (diagonal), the case of 1 and 256 clusters is the same. 
For all other cases, in each row, the case of 256 clusters remains the same (only child algorithm works), and for each column, the case of a single cluster remains the same (only parent algorithm works).

\begin{figure}[H]
    \centering
    \includegraphics[width=.92\linewidth]{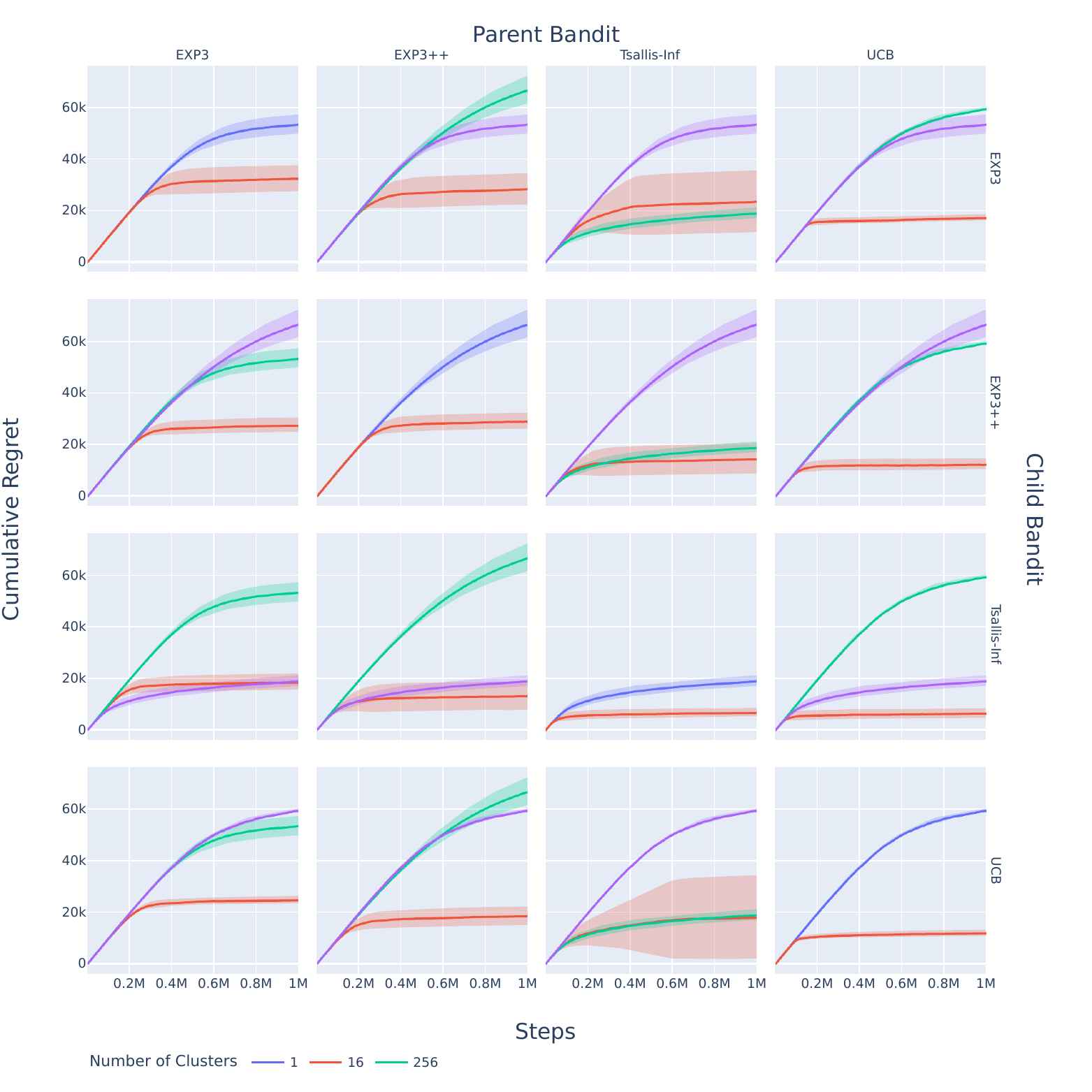}
    \caption{Comparing different parent-child combinations in \EXP32.
    Stochastic environment as in Figure \ref{fig:4stochastic}.}
    \label{fig:stochasticParent-Child}
    \Description{Grid of regret curves for different parent--child algorithm combinations under a stochastic environment.}
\end{figure}

\begin{figure}[H]
    \centering
    \includegraphics[width=.92\linewidth]{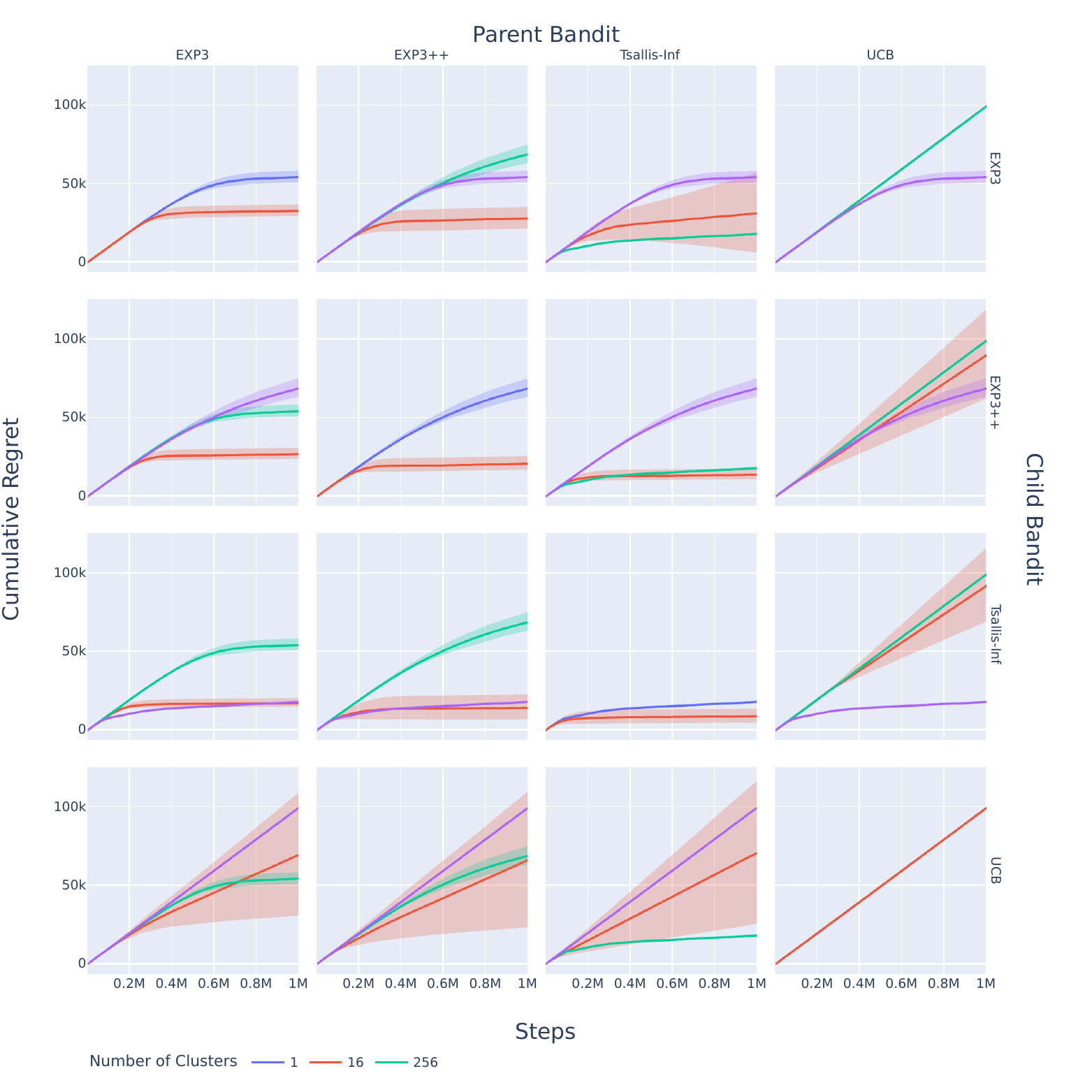}
    \caption{Comparing different parent-child combinations in \EXP32.
    Nonstochastic environment as in Figure \ref{fig:4staticArmNonstochastic}.}
    \label{fig:adversarialParent-Child}
    \Description{Grid of regret curves for different parent--child algorithm combinations under a nonstochastic environment.}
\end{figure}

\revisionCR{Three observations stand out across both environments. First, \EXP32 ($p=\sqrt{k}$) consistently dominates or matches both extremes for every choice of base bandits, confirming that the hierarchical regret reduction is not an artifact of a single algorithm pair. Second, the choice of \emph{child} bandit influences \EXP32's performance more than the choice of \emph{parent} bandit, since the child sees the residual variability inside a cluster, where the bulk of the per-step decisions are made. Third, Tsallis-INF as the child consistently produces the lowest regret, motivating our default Tsallis-Tsallis configuration; pairings such as EXP3$\to$Tsallis-INF or UCB$\to$Tsallis-INF remain competitive when the parent's exploration profile must be controlled separately.}

\end{document}